\newtheorem{theorem}{Theorem}
\newtheorem{lemma}[theorem]{Lemma}
\newtheorem{corollary}[theorem]{Corollary}
\theoremstyle{definition}
\newtheorem{definition}{Definition}
\newtheorem{example}{Example}
\theoremstyle{definition}
\newtheorem{remark}{Remark}
\newcommand{\pr}{\mathrm{Pr}}
\newcommand{\nbb}{\mathbb{N}}
\newcommand{\bz}{\mathbf{z}}
\newcommand{\bw}{\mathbf{w}}
\newcommand{\ibb}{\mathbb{I}}
\newcommand{\xcal}{\mathcal{X}}
\newcommand{\wcal}{\mathcal{W}}
\newcommand{\bW}{\mathbf{W}}
\newcommand{\zcal}{\mathcal{Z}}
\newcommand{\ycal}{\mathcal{Y}}
\newcommand{\ebb}{\mathbb{E}}
\newcommand{\rbb}{\mathbb{R}}
\newcommand{\pbb}{\mathbb{P}}
\numberwithin{equation}{section}
\newcommand{\bx}{x}
\newcommand{\N}{\mathbb{N}}
\newcommand{\R}{\mathbb{R}}
 \newcommand{\ACe}{\color{black}}
\newlength{\fixboxwidth}
\title{Bootstrap SGD: Algorithmic Stability and Robustness}
\date{\today}
\author[1]{Andreas Christmann}
\author[2]{Yunwen Lei}
\affil[1]{Department of Mathematics, University of Bayreuth, 95440 Bayreuth, GERMANY\\ \url{andreas.christmann@uni-bayreuth.de}}
\affil[2]{Department of Mathematics,
The University of Hong Kong,
Pokfulam
Hong Kong, China\\ \url{leiyw@hku.hk} }
\begin{document}
\maketitle


\begin{quote}
In this paper some methods to use the empirical bootstrap approach for stochastic gradient descent (SGD) to minimize the empirical risk over a separable Hilbert space are investigated from the view point of algorithmic stability and statistical robustness. The first two types of approaches are based on averages and are investigated from a theoretical point of view. A generalization analysis for bootstrap SGD of Type 1 and Type 2 based on algorithmic stability is done. Another type of bootstrap SGD is proposed to demonstrate that it is possible to construct purely distribution-free pointwise confidence intervals of the median curve using bootstrap SGD.
\end{quote}

\section{Introduction}

Bootstrap is an effective statistical method to improve the generalization of machine learning models by resampling. The basic idea is to first use sampling with replacement to produce  $B$ bootstrap samples from the original training data set. Then, for the $b$-th bootstrap sample ($b=1,\ldots,B$), a learning algorithm is applied to produce a model $h^{(b)}$. Finally, an aggregation method is used to combine these local models for prediction.
There exists a huge literature on bootstrap, and we refer to \cite{Efron1979, Efron1982,EfronHastie2016} and the references therein.

Depending on the aggregation method, we have different types of bootstrap methods.
In this paper, we consider three types of bootstrap methods for stochastic gradient descent (SGD) methods, which become the workhorse behind the success of many machine learning applications and have received a lot of attention~\citep{neu2021information,orabona2019modern,hardt2016train}.
For bootstrap of Type 1, we take an average of weight parameters and use the model associated to the averaged weight parameter for prediction. For bootstrap of Type 2, we first use each local model for prediction, and then take an average of these predicted outputs as the final prediction. For bootstrap of Type 3, we also first predict by each local model but then use the  highly robust median of these predicted output values to obtain
pointwise confidence intervals for the median and pointwise tolerance intervals.

Due to the resampling and the aggregation scheme, bootstrap  can be  useful to improve the stability and robustness, which intuitively means the ability to withstand perturbations  and outliers, which are both common in many applications
and in routine data. We say an algorithm is stable if the output of the algorithm is not sensitive to the perturbation of a training dataset~\citep{bousquet2002stability,hardt2016train,shalev2010learnability,liu2017algorithmic}.  Pioneering stability analysis rigorously shows the improvement of stability by bootstraping~\citep{elisseeff2005stability}. However, their analysis treated multiple copies of a training example as a single example, which is not the choice in practice.
Furthermore, bootstrap is also useful to do inference such as building confidence intervals for some parameters of interest~\citep{Efron1979}. This inference is useful to understand how reliable the prediction is~\citep{ChristmannSalibianBarreraVanAelst2013,Hampel1974}.

In this paper, we study the algorithmic stability and robustness of bootstrap SGD.
Our main contributions are summarized as follows.
\begin{itemize}
  \item We study the $\ell_1$- and $\ell_2$-argument stability of bootstrap SGD applied to convex, smooth and Lipschitz problems. Our $\ell_2$-argument stability analysis shows how the bootstrap samples improve the generalization behavior. We consider bootstrap SGD of Type 1 and Type 2. For bootstrap SGD of Type 1, our analysis is able to handle multiple copies of a training example in a bootstrap sample, which was treated as a single example in the existing analysis~\citep{elisseeff2005stability}. We give the first stability analysis for bootstrap SGD of Type 2.
  \item  Using a well-known result from order statistics, one can
    easily construct pointwise confidence intervals for the median and
    pointwise tolerance intervals. From our point of view,
    this enlarges the options of bootstrap SGD. Although we do not investigate
    these intervals from a theoretical point of view, we show their usefulness
    in a numerical example.
\end{itemize}

The remainder of the paper is organized as follows. We introduce the background in Section~\ref{sec:background}. We study the stability of bootstrap SGD in Section~\ref{sec:stab}, and the distribution-free confidence intervals in Section~\ref{sec:confidence}. We present numerical analysis to illustrate the behavior of bootstrap SGD in Section~\ref{sec:example}, and conclude with a short discussion  in Section~\ref{sec:discussion}. The proofs are given in Section~\ref{sec:proof}.

%
%

\section{Background\label{sec:background}}

\subsection{Algorithmic Stability}
Let $\pbb$ be a probability measure defined over a sample space
$\zcal=\xcal\times\ycal$, where $\xcal$ is an input space and $\ycal$ is an output space.
Let $S= (z_1,\ldots,z_n)\in\zcal^{n}$ be a training dataset of size $n$, based on which we aim to build a function
$h:\xcal\mapsto\ycal$ for prediction.
We assume the prediction function $h$ is indexed by a parameter $\bw\in\wcal$, where $\wcal$ is a normed space and has the role of a parameter
space.
Let $\ell:\rbb\times\rbb\mapsto\rbb_+$ be a loss function, and we denote $\ell(h_{\bw}(x),y)$ the loss suffered by using $h_{\bw}$ to do prediction on $z=(x,y)$.
For brevity, we denote $f(\bw;z):=\ell(h_{\bw}(x),y)$. The behavior of a model on training and testing is then measured by the empirical and population risk as follows
\[
F_S(\bw):=\frac{1}{n}\sum_{i=1}^{n}f(\bw;z_i),\quad F(\bw):=\ebb_z[f(\bw;z)],
\]
where $\ebb_z[\cdot]$ denotes the expectation with respect to (w.r.t.) the distribution of $z$.
We often apply a (randomized) learning algorithm $A$ to (approximately) minimize the empirical risk, and we denote by $A(S)$ the model derived by applying $A$ to the dataset $S$.
We will assume in this paper -- if not otherwise mentioned -- that $\wcal$ is a separable Hilbert space and that the
(measurable)\footnote{We will always assume that the learning algorithm $A$ is measurable, i.e., for all
$n\ge 1$, the map $(\xcal\times\ycal)^{n}\times \xcal \to \R$, $(S,x) \mapsto A(S)(x)$ is measurable with respect to the universal completion of the product $\sigma$-algebra on $(\xcal\times\ycal)^{n}\times \xcal$,
where $A(S)(\cdot)$ denotes the decision function of the learning algorithm $A$.}  learning algorithm maps into a separable Hilbert space. Examples are
of course $\R^{d}$ or separable reproducing kernel Hilbert spaces.
The relative behavior of $A(S)$ as compared to the best model $\bw^*=\arg\min_{\bw\in\wcal}\; F(\bw)$ is referred to as the excess population risk $F(A(S))-F(\bw^*)$, which can be decomposed as follows
\[
F(A(S))-F(\bw^*)=F(A(S))-F_S(A(S))+F_S(A(S))-F_S(\bw^*)+F_S(\bw^*)-F(\bw^*).
\]
We refer to the first term $F(A(S))-F_S(A(S))$ and the third term $F_S(\bw^*)-F(\bw^*)$ as the generalization gap, as they measure the difference between training and testing. We refer to the second term $F_S(A(S))-F_S(\bw^*)$ as the optimization error as it measures the suboptimality of $A(S)$ as measured by the training error. The optimization error is a central concept in optimization theory and has been extensively studied in the literature~\citep{orabona2019modern}. The generalization gap is a central concept in statistical learning theory, which is closely related to the stability~\citep{kuzborskij2013stability,nikolakakisbeyond,schliserman2022stability,zhang2024generalization} and robustness~\citep{christmann2016robustness,huang2022learning,xu2012robustness,sun2024total} of the learning algorithm. In this paper, we will leverage the algorithmic stability to study the stability of boostrap algorithms. We first introduce several popular stability concepts.
We denote $S\sim\widetilde{S}$ if they are neighboring datasets, i.e., $S$ and $\widetilde{S}$ differ by a single example. Let $\epsilon>0$ and $\|\cdot\|_2$ denote the $\ell_2$ norm.
\begin{definition}[Uniform stability~\citep{bousquet2002stability}]
Let $A$ be a (randomized) learning algorithm. We say $A$ is uniformly stable with parameter $\epsilon$ if $\sup_{S\sim\widetilde{S}}\sup_z\ebb_A[f(A(S);z)-f(A(\widetilde{S});z)]\leq \epsilon$, where $\ebb_A[\cdot]$ denotes the expectation w.r.t. the distribution of the algorithm $A$.
\end{definition}

\begin{definition}[Argument stability]
Let $A$ be a (randomized) learning algorithm. We say $A$ is $\ell_1$-argument stable with parameter $\epsilon$ if $\sup_{S\sim\widetilde{S}}\ebb_A[\|A(S)-A(\widetilde{S})\|_2]\leq \epsilon$. We say $A$ is $\ell_2$-argument stability with parameter $\epsilon$ if $\sup_{S\sim\widetilde{S}}\ebb_A[\|A(S)-A(\widetilde{S})\|_2^2]\leq\epsilon^2$.
\end{definition}

It is clear that $\ell_2$-argument stability is stronger than $\ell_1$-argument stability, which further implies uniform stability if $f$ is Lipschitz continuous.

\begin{definition}
Let $g:\wcal\mapsto\rbb$, where $\wcal$ is a normed space. Let $L,G\geq0$.
\begin{itemize}
  \item We say $g$ is $G$-Lipschitz continuous if for any $\bw,\bw'\in\wcal$ we have $|g(\bw)-g(\bw')|\leq G\|\bw-\bw'\|_2$.
  \item We say $g$ is $L$-smooth if for any $\bw,\bw'\in\wcal$ we have $\|\nabla g(\bw)-\nabla g(\bw')\|_2\leq L\|\bw-\bw'\|_2$, where $\nabla$ denotes the gradient operator.
  \item We say $g$ is convex if for any $\bw,\bw'\in\wcal$ we have $g(\bw)\geq g(\bw')+\langle\bw-\bw',\nabla g(\bw')\rangle$, where $\langle\cdot,\cdot\rangle$ denotes the dot product.
\end{itemize}
\end{definition}

\begin{example}
Let $\xcal=\R^{d}$, $\ycal\subset \R$, and $z=(x,y)\in \xcal\times\ycal$.
\begin{enumerate}
\item[(i)] The least squares loss function $f=f_{\text{LS}}$ is defined by $f(\bw;z)=\frac{1}{2}(y-h_{\bw}(x))^{2}$.
\item[(ii)] The logistic loss function $f=f_{\text{c-logis}}$ for binary classification is defined by
$f(\bw;z)=\ln\bigl(1+\exp(-yh_{\bw}(x))\bigr)$, where $\ycal=\{-1,+1\}$.
\item[(iii)] The logistic loss function $f=f_{\text{r-logis}}$ for regression is defined by
$f(\bw;z)=-\ln\frac{4 e^{y-h_{\bw}(x)}}{(1+e^{y-h_{\bw}(x)})^{2}}$~\citep{steinwart2008support}.
\end{enumerate}

Basic computations show that
\begin{eqnarray*}
  \nabla f_{\text{LS}}(\bw;z) & = & (h_{\bw}(x)-y) \nabla h_{\bw}(x),\\
 \nabla f_{\text{c-logis}}(\bw;z) & = & -\frac{y}{1+\exp(y h_{\bw}(x))} \nabla h_{\bw}(x),\\
\nabla f_{\text{r-logis}}(\bw;z)& = & \tanh\bigl((h_{\bw}(x)-y)/2\bigr) \nabla h_{\bw}(x).
\end{eqnarray*}
Note, that these three loss functions are convex with respect to $h_{\bw}(x)$.
To clearly see the property of these loss functions, we consider a linear model $h_{\bw}(x)=\bw^\top x$ and assume $\|x\|_2\leq 1$ for all $x\in \xcal$ and $\bw^\top$ means the transpose of $\bw$. In this case, it can be directly checked that all these loss functions are smooth. Furthermore, the logistic loss for both regression and binary classification is Lipschitz continuous. For example, it is clear that
\[
\big\|\nabla f_{\text{c-logis}}(\bw;z)\big\|_2=\frac{|y|}{1+\exp(y h_{\bw}(x))} \cdot \big\|\nabla h_{\bw}(x)\big\|_2 \leq \|x\|_2\leq 1.
\]
However, the least square loss is not Lipschitz continuous since
\[
\|\nabla f_{\text{LS}}(\bw;z)\|_2=|h_{\bw}(x)-y| \cdot \|\nabla h_{\bw}(x)\|_2
= |\bw^\top x-y| \cdot \|x\|_2,
\]
which goes to infinity as the norm of $\bw$ goes to infinity.
\end{example}

The following lemma shows the quantitative connection between stability and generalization.
The first part shows the connection between generalization and $\ell_1$-argument stability, while the second part shows the connection between generalization and $\ell_2$-argument stability.
\begin{lemma}[Stability and Generalization\label{lem:stab-gen}~\citep{lei2020fine}]
Let $A$ be an algorithm. Let $G,L,\epsilon>0$.
\begin{itemize}
  \item Suppose $A$ is $\ell_1$-argument stable with parameter $\epsilon$.  If for any $z$, the map $\bw\mapsto f(\bw,z)$ is $G$-Lipschitz continuous, then $\ebb[F(A(S))-F_S(A(S))]\leq G\epsilon$.
  \item Suppose $A$ is $\ell_2$-argument stable with parameter $\epsilon$.  If for any $z$, the map $\bw\mapsto f(\bw,z)$ is $L$-smooth, then
  \[
  \ebb[F(A(S))-F_S(A(S))]\leq \frac{L\epsilon^2}{2}+\epsilon\big(2L \ebb[F_S(A(S))]\big)^{\frac{1}{2}}.
  \]
\end{itemize}
\end{lemma}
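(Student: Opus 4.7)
The plan is to use the standard ghost--sample / replacement argument to rewrite $\ebb[F(A(S))-F_S(A(S))]$ as an average of one--point perturbation increments of the loss, and then to control these increments via argument stability together with the assumed regularity of $f$.

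Let $S=(z_1,\ldots,z_n)$ and $S'=(z_1',\ldots,z_n')$ be independent i.i.d.\ samples from $\pbb$, and for each $i$ let $S^{(i)}$ denote $S$ with $z_i$ replaced by $z_i'$. Since $A(S^{(i)})$ is independent of $z_i$ and $z_i\sim\pbb$, we have $\ebb[f(A(S^{(i)});z_i)]=\ebb[F(A(S^{(i)}))]=\ebb[F(A(S))]$, the last equality using that $S$ and $S^{(i)}$ are identically distributed. Subtracting $\ebb[F_S(A(S))]=\frac{1}{n}\sum_i\ebb[f(A(S);z_i)]$ yields the representation
\[
\ebb[F(A(S))-F_S(A(S))]=\frac{1}{n}\sum_{i=1}^{n}\ebb\bigl[f(A(S^{(i)});z_i)-f(A(S);z_i)\bigr].
\]
For part (i), $G$--Lipschitz continuity gives $|f(A(S^{(i)});z_i)-f(A(S);z_i)|\le G\|A(S^{(i)})-A(S)\|_2$; since $S$ and $S^{(i)}$ are neighboring, $\ell_1$--argument stability bounds the expected right--hand side by $G\epsilon$, and averaging over $i$ concludes the argument.

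For part (ii), the extra ingredient is the self--bounding property: for any nonnegative $L$--smooth function $g$ one has $\|\nabla g(\bw)\|_2\le\sqrt{2Lg(\bw)}$, obtained by applying the smoothness inequality at $\bw-\frac{1}{L}\nabla g(\bw)$ and using $g\ge 0$. Smoothness also yields the quadratic upper bound
\[
f(A(S^{(i)});z_i)-f(A(S);z_i)\le \langle\nabla f(A(S);z_i),A(S^{(i)})-A(S)\rangle+\frac{L}{2}\|A(S^{(i)})-A(S)\|_2^{2}.
\]
Bounding the inner product by Cauchy--Schwarz in $\wcal$, substituting the self--bounding estimate for the gradient norm, and applying Cauchy--Schwarz for expectations produces the estimate $\sqrt{2L\,\ebb[f(A(S);z_i)]}\cdot\sqrt{\ebb\|A(S^{(i)})-A(S)\|_2^{2}}$ for the expected inner product. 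Using $\ell_2$--argument stability on the second factor, averaging over $i$, and invoking Jensen's inequality for the concave map $\sqrt{\cdot}$ (to pull the average inside the square root) gives the first--order term $\epsilon\sqrt{2L\,\ebb[F_S(A(S))]}$, while the quadratic remainder contributes $\frac{L\epsilon^2}{2}$.

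The main subtlety lies in part (ii): one must not simply deduce part (i) from $\ell_1$--stability $\le\epsilon$ (which follows from $\ell_2$--stability by Jensen), as this would miss the optimistic factor $\sqrt{\ebb[F_S(A(S))]}$ needed for fast rates when the empirical risk is small. The correct maneuver is to use smoothness twice---to expand the loss difference with a quadratic remainder and, independently via self--bounding, to control the gradient norm by the loss itself---so that Cauchy--Schwarz converts the quadratic $\ell_2$--stability into a product of $\epsilon$ and an empirical--risk--dependent quantity, plus a second--order term of order $L\epsilon^2$.
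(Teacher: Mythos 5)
Your proof is correct and follows essentially the same route the paper takes for its own proved variant of this result, Lemma~\ref{lem:stab-gen-ell}: the leave-one-out symmetrization $\ebb[F(A(S))-F_S(A(S))]=\frac{1}{n}\sum_{i=1}^{n}\ebb\big[f(A(S^{(i)});z_i)-f(A(S);z_i)\big]$, then Lipschitz continuity for the first part and the smoothness expansion combined with Cauchy--Schwarz and Jensen for the second. Note that the paper only cites Lemma~\ref{lem:stab-gen} from \citep{lei2020fine} without reproving it; the one ingredient you supply beyond the paper's variant is the self-bounding inequality $\|\nabla g(\bw)\|_2^2\leq 2Lg(\bw)$ for nonnegative $L$-smooth $g$, which is exactly what is needed to convert the gradient factor into $\big(2L\ebb[F_S(A(S))]\big)^{1/2}$ (and which relies on the standing assumption $f\geq 0$), and you derive and apply it correctly.
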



\subsection{Bootstrap SGD}
Bootstrap methods were introduced by \cite{Efron1979,Efron1982} and can be used to estimate
standard errors of estimators in a nonparametric manner.
There exists a huge literature on bootstrap methods, see e.g.
\cite{EfronHastie2016} and the references therein, and for specific bootstrap results for machine learning methods based on kernels we refer to \cite{ChristmannHable2013} and
\cite{ChristmannSalibianBarreraVanAelst2013}.
Bootstrapping can in particular be used
to construct distribution-free confidence intervals and distribution-free tolerance intervals.

Let $S=(\bz_1,\ldots,\bz_n)$. Bootstrap methods first build $B\in\nbb$ bootstrap samples by repeatedly drawing samples from $S$, each of which is of size $m$.
Let
$$
I_b=(i_{b,1},\ldots,i_{b,m}),\quad b=1,\ldots,B,
$$
be bootstrap indices which are drawn independently from the uniform distribution over $[n]=\{1,\ldots,n\}$ \textbf{with replacement}. Then, the $b$-th bootstrap sample is
\begin{equation}\label{S-b}
  S^{(b)}=(\bz_{i_{b,1}},\ldots,\bz_{i_{b,m}}).
\end{equation}

\begin{definition}[Bootstrap method\label{def:type}]
Let $S^{(b)}$ be defined as Eq. \eqref{S-b}. Then, we apply an algorithm to $S^{(b)}$ and get a model $\bw^{(b)}$. The bootstrap method outputs one of the following models for final prediction:
\begin{itemize}
\item Type 1: $h_S^{(1)}=h_{\bar{\bw}}$, where $\bar{\bw}=\frac{1}{B}\sum_{b=1}^{B}\bw^{(b)}$.
\item Type 2: $h_S^{(2)}=\frac{1}{B}\sum_{b=1}^{B}h_{\bw^{(b)}}$.
\item Type 3: $h_S^{(3)}(x)=\mathrm{median}_{1\le b\le B} \{h_{\bw^{(b)}}(x)\}$, $x\in\xcal$.
\end{itemize}
\end{definition}

\begin{definition}[Bootstrap SGD\label{def:sgd}]
For the $b$-th data set, at the $t$-th iteration, bootstrap SGD first selects $j_{b,t}$ from the uniform distribution over $[m]$ and then update $\bw_t^{(b)}$ as follows
\[
\bw_{t+1}^{(b)}=\bw_{t}^{(b)}-\eta_t\nabla f(\bw_{t}^{(b)};\bz_{i_{b,j_{b,t}}}),
\]
where $(\eta_t)_{t\in\N}$ is a sequence of positive step sizes.
After $T$ iterations, bootstrap SGD outputs $A(S)=\frac{1}{B}\sum_{b=1}^{B}\bw_{T+1}^{(b)}$.
If not otherwised mentioned, we use $\bw_{1}^{(b)}=0$ as the starting value.
\end{definition}

\section{Algorithmic Stability of Bootstrap SGD\label{sec:stab}}

\subsection{Stability Analysis for Bootstrap SGD of Type 1}

Let $\widetilde{S}=(\tilde{\bz}_1,\ldots,\tilde{\bz}_n)$ be a neighbouring dataset of $S$, i.e., $S$ and $\widetilde{S}$ only differ by a single example.
We will assume in this paper, if not mentioned otherwise, that the learning algorithm is
permutation invariant. Hence without loss of generality, we can assume that $S$ and $\widetilde{S}$ differ by the last example, i.e.,
\[
\bz_i=\tilde{\bz}_i,\quad\text{if }i<n.
\]
We assume $\bz_1,\ldots,\bz_n,\tilde{\bz}_n$ are independently drawn from the same distribution. The proofs of results in this subsection are given in Section~\ref{sec:proof-type1}.

\begin{lemma}\label{lem:sgd}
  Let $(\bw_t^{(b)})$ and $(\tilde{\bw}_t^{(b)})$ be produced by bootstrap SGD based on $S$ and $\widetilde{S}$, respectively. Assume the map $\bw\mapsto f(\bw;\bz)$ is convex and $L$-smooth. If $\eta_t\leq 1/L$, then
  \[
  \big\|\bw_{T+1}^{(b)}-\tilde{\bw}_{T+1}^{(b)}\big\|_2\leq \sum_{t=1}^{T}\eta_t\delta_{t,b}\ibb[i_{b,j_{b,t}}=n],
  \]
  where we introduce
  \[
  \delta_{t,b}=\|\nabla f(\bw_{t}^{(b)};\bz_{n})-\nabla f(\bw_{t}^{(b)};\tilde{\bz}_{n})\|_2
  \]
  and $\ibb[i_{b,j_{b,t}}=n]$ is the indicator function, taking value $1$ if $i_{b,j_{b,t}}=n$, and $0$ otherwise.
\end{lemma}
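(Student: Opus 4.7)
The plan is to argue by a coupling argument combined with induction on $t$, following the standard template of Hardt--Recht--Singer style stability analysis for SGD. I would couple the two runs of bootstrap SGD on $S$ and $\widetilde{S}$ so that they share the same bootstrap indices $(i_{b,k})_{k=1}^{m}$ and the same inner random choices $(j_{b,t})_{t=1}^{T}$. Because the two datasets agree on the first $n-1$ examples, at every iteration $t$ there are only two cases: either $i_{b,j_{b,t}}\ne n$, in which case both updates use the identical sample, or $i_{b,j_{b,t}}=n$, in which case one update uses $\bz_n$ while the other uses $\tilde{\bz}_n$. Since $\bw_1^{(b)}=\tilde{\bw}_1^{(b)}=0$, the base case of the induction is trivial.

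The key technical ingredient is that for any convex $L$-smooth $g:\wcal\to\rbb$, the gradient-descent map $\bw\mapsto \bw-\eta\nabla g(\bw)$ is non-expansive whenever $\eta\le 2/L$, which in particular covers $\eta_t\le 1/L$. This is a classical consequence of co-coercivity of the gradient, namely $\langle\nabla g(\bw)-\nabla g(\bw'),\bw-\bw'\rangle\ge L^{-1}\|\nabla g(\bw)-\nabla g(\bw')\|_2^2$, and I would either quote it or sketch it in one line via expanding $\|(\bw-\eta\nabla g(\bw))-(\bw'-\eta\nabla g(\bw'))\|_2^2$.

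Armed with non-expansiveness, the inductive step splits into the two cases above. If $i_{b,j_{b,t}}\ne n$, the two updates apply the same gradient operator $\bw\mapsto\bw-\eta_t\nabla f(\bw;\bz_{i_{b,j_{b,t}}})$, so non-expansiveness gives $\|\bw_{t+1}^{(b)}-\tilde{\bw}_{t+1}^{(b)}\|_2\le\|\bw_t^{(b)}-\tilde{\bw}_t^{(b)}\|_2$. If $i_{b,j_{b,t}}=n$, I add and subtract $\eta_t\nabla f(\bw_t^{(b)};\tilde{\bz}_n)$ inside the update difference, decomposing $\bw_{t+1}^{(b)}-\tilde{\bw}_{t+1}^{(b)}$ as $\bigl[\bw_t^{(b)}-\eta_t\nabla f(\bw_t^{(b)};\tilde{\bz}_n)\bigr]-\bigl[\tilde{\bw}_t^{(b)}-\eta_t\nabla f(\tilde{\bw}_t^{(b)};\tilde{\bz}_n)\bigr]-\eta_t\bigl[\nabla f(\bw_t^{(b)};\bz_n)-\nabla f(\bw_t^{(b)};\tilde{\bz}_n)\bigr]$. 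The first bracket applies the non-expansive map with $g=f(\cdot;\tilde{\bz}_n)$ to both arguments, so its norm is at most $\|\bw_t^{(b)}-\tilde{\bw}_t^{(b)}\|_2$, while the residual third term has norm exactly $\eta_t\delta_{t,b}$. Unifying both cases yields $\|\bw_{t+1}^{(b)}-\tilde{\bw}_{t+1}^{(b)}\|_2\le\|\bw_t^{(b)}-\tilde{\bw}_t^{(b)}\|_2+\eta_t\delta_{t,b}\ibb[i_{b,j_{b,t}}=n]$, and telescoping from $t=1$ to $T$ gives the claim. The only delicate point is choosing the decomposition so that $\delta_{t,b}$ is evaluated at $\bw_t^{(b)}$ rather than at $\tilde{\bw}_t^{(b)}$, as stated in the lemma; no serious obstacle is anticipated beyond this bookkeeping.
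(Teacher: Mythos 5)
Your proposal is correct and follows essentially the same route as the paper: the non-expansiveness of the gradient-descent map for convex $L$-smooth losses (the paper's Lemma~\ref{lem:hardt}, due to Hardt et al.), a two-case split on whether $i_{b,j_{b,t}}=n$, the add-and-subtract of $\eta_t\nabla f(\bw_t^{(b)};\tilde{\bz}_n)$ in the perturbed case so that $\delta_{t,b}$ is evaluated at $\bw_t^{(b)}$, and a telescoping recursion from $\bw_1^{(b)}=\tilde{\bw}_1^{(b)}$. The paper leaves the add-and-subtract decomposition implicit in its triangle-inequality step, whereas you spell it out; otherwise the arguments coincide.
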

As we will show in Lemma \ref{lem:exp-indicator}, $\ebb_A[\ibb[i_{b,j_{b,t}}=n]]=\frac{1}{n}$. Therefore, we can apply Lemma \ref{lem:sgd} to get the following theorem on the stability of bootstrap SGD.

\begin{theorem}[Stability bounds\label{thm:sgd}]
  Let assumptions in Lemma \ref{lem:sgd} hold. Then
  \[
  \ebb_A\big[\|A(S)-A(\widetilde{S})\|_2\big]\leq \sum_{t=1}^{T}\eta_t\ebb_A[\delta_{t,1}\ibb[i_{1,j_{1,t}}=n]]
  \]
  and
  \begin{multline*}
    \ebb_A\big[\|A(S)-A(\widetilde{S})\|_2^2\big]
     \leq \frac{2}{mB}\sum_{r=1}^{m}\frac{rC_m^r(n-1)^{m-r}}{n^m}\cdot\ebb_A\Big[\sum_{t=1}^{T}\eta_t^2\delta_{t,1}^2\big|\sum_{k=1}^m\ibb[i_{1,k}=n]=r\Big]+\\
  \frac{2}{m^2B}\sum_{r=0}^{m}\frac{r^2C_m^r(n-1)^{m-r}}{n^m}\cdot\ebb_A\Big[\Big(\sum_{t=1}^{T}\eta_t\delta_{t,1}\Big)^2\big|\sum_{k=1}^m\ibb[i_{1,k}=n]=r\Big]+ (1-\frac{1}{B})\Big(\ebb_A\Big[\sum_{t=1}^{T}\eta_t\delta_{t,1}\ibb[i_{1,j_{1,t}}=n]\Big]\Big)^2.
  \end{multline*}
\end{theorem}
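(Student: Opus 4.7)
My plan for the first bound is direct: write $A(S)-A(\widetilde S)=\frac{1}{B}\sum_{b=1}^B X_b$ with $X_b:=\bw_{T+1}^{(b)}-\tilde\bw_{T+1}^{(b)}$, apply the triangle inequality, bound each $\|X_b\|_2$ by Lemma \ref{lem:sgd}, and take expectations. Because the $B$ bootstrap runs are i.i.d., each of the $B$ summands contributes the same expected value, giving $\sum_{t=1}^T \eta_t\,\ebb_A[\delta_{t,1}\ibb[i_{1,j_{1,t}}=n]]$.

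For the second bound, I would expand
\[
\ebb_A\bigl[\|A(S)-A(\widetilde S)\|_2^2\bigr]=\frac{1}{B^2}\sum_{b=1}^B\ebb_A\bigl[\|X_b\|_2^2\bigr]+\frac{1}{B^2}\sum_{b\ne b'}\ebb_A[\langle X_b,X_{b'}\rangle],
\]
and deal with the two pieces separately. For the cross terms, the bootstrap indices $i_{b,\cdot}$ and the SGD picks $j_{b,\cdot}$ are independent across $b$, so $X_b$ and $X_{b'}$ are independent and identically distributed; hence $\ebb_A[\langle X_b,X_{b'}\rangle]=\|\ebb_A X_1\|_2^2$. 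Two Jensen steps (the norm of the mean is at most the mean of the norm, then bound $\|X_1\|_2$ by Lemma \ref{lem:sgd}) produce exactly the third summand $(1-1/B)\bigl(\ebb_A[\sum_t\eta_t\delta_{t,1}\ibb[i_{1,j_{1,t}}=n]]\bigr)^2$ once the $B(B-1)$ equal off-diagonal terms are collected.

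The heart of the proof is the diagonal term $\frac{1}{B}\ebb_A[\|X_1\|_2^2]$. I would condition on $N:=\sum_{k=1}^m\ibb[i_{1,k}=n]$, whose law under the bootstrap draw is $\mathrm{Binomial}(m,1/n)$, so that $\Pr(N=r)=C_m^r(n-1)^{m-r}/n^m$ gives the prefactors in the theorem. On $\{N=r\}$ write
\[
\sum_{t=1}^T\eta_t\delta_{t,1}\ibb[i_{1,j_{1,t}}=n]=\sum_{t=1}^T M_t+\frac{r}{m}\sum_{t=1}^T\eta_t\delta_{t,1},\qquad M_t:=\eta_t\delta_{t,1}\Bigl(\ibb[i_{1,j_{1,t}}=n]-\tfrac{r}{m}\Bigr),
\]
and apply $(a+b)^2\le 2a^2+2b^2$. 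The key observation is that $(M_t)$ is a martingale-difference sequence with respect to the filtration generated by the bootstrap indices and $j_{1,1},\dots,j_{1,t}$: given that history, $\delta_{t,1}$ and $r$ are measurable while $j_{1,t}$ remains uniform on $[m]$, so $\ebb_A[\ibb[i_{1,j_{1,t}}=n]\mid\text{past},N=r]=r/m$ and the centred summands are orthogonal. Using this orthogonality together with $\mathrm{Var}(\ibb[i_{1,j_{1,t}}=n]\mid K_r)=(r/m)(1-r/m)\le r/m$ yields
\[
\ebb_A\Bigl[\|X_1\|_2^2\,\big|\,N=r\Bigr]\le \frac{2r}{m}\,\ebb_A\Bigl[\sum_{t=1}^T\eta_t^2\delta_{t,1}^2\,\big|\,N=r\Bigr]+\frac{2r^2}{m^2}\,\ebb_A\Bigl[\Bigl(\sum_{t=1}^T\eta_t\delta_{t,1}\Bigr)^2\,\big|\,N=r\Bigr],
\]
and averaging over $N$ with the binomial weights reproduces the first two terms in the stated bound (the $r=0$ case contributes zero to the first sum, matching $\sum_{r=1}^m$).

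The main obstacle is the orthogonality claim for the $M_t$'s: one must keep track of two sources of randomness (the bootstrap draws $i_{1,\cdot}$ and the SGD samples $j_{1,\cdot}$) and be careful that the conditioning event $\{N=r\}$ fixes $|K_r|$ without killing the conditional uniformity of $j_{1,t}$. Once this is in place, the rest is Jensen, $(a+b)^2\le 2a^2+2b^2$, and straightforward arithmetic with $\Pr(N=r)$.
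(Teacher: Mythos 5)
Your proposal is correct and follows essentially the same route as the paper: triangle inequality plus Lemma \ref{lem:sgd} for the $\ell_1$ bound, then a diagonal/cross-term decomposition with independence across bootstrap samples for the cross terms, and for the diagonal term conditioning on $N=\sum_{k=1}^m\ibb[i_{1,k}=n]$ (with the binomial weights of Lemma \ref{lem:comb}), centering at $r/m$, applying $(a+b)^2\le 2(a^2+b^2)$, and using the vanishing of the cross terms of the centered sum together with $\ebb[(X-\ebb[X])^2]\le\ebb[X^2]$. The only cosmetic differences are that you handle the cross terms via $\ebb_A[\langle X_b,X_{b'}\rangle]=\|\ebb_A X_1\|_2^2$ followed by Jensen rather than first passing to norms, and you invoke the exact conditional variance $(r/m)(1-r/m)$ where the paper simply drops the centering; both yield the identical bound.
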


\begin{remark}
  The hypothesis stability of bootstrap methods has been studied in \citep{elisseeff2005stability}. Specifically, assume the base machine has hypothesis stability $\gamma_k$ for learning with a dataset of size $k$ and $m=n$. Then, the hypothesis stability parameter $\beta$ of the bootstrap method satisfies \citep{elisseeff2005stability}
  \[
  \beta\leq G\sum_{k=1}^{n}\frac{k\gamma_k}{n}\mathrm{Pr}\big\{d(I)=k\big\},
  \]
  where $I=(i_1,\ldots,i_n)$ and $d(I)$ denotes the number of distinct sampled points in one bootstrap sample. Here $i_j,j\in[n]$ follows the uniform distribution over $[n]$. However, the analysis there treats multiple copies of a training point as one point, which is not the case in practice. It is also indicated there how to extend their results to the case where multiple copies of a point are treated is an open question~\citep{elisseeff2005stability}. As a comparison, our analysis does not require this simplifying assumption and therefore provides an affirmative solution to their open question. It should be mentioned that the analysis in \citep{elisseeff2005stability} considers bootstrap with a general base machine. As a comparison, this paper considers bootstrap with the base algorithm being SGD.
\end{remark}
\begin{corollary}\label{cor:sgd}
  Let assumptions in Lemma \ref{lem:sgd} hold. Assume that the map $\bw\mapsto f(\bw;\bz)$ is $G$-Lipschitz continuous. Then
  \[
  \ebb_A\big[\|A(S)-A(\widetilde{S})\|_2\big]\leq\frac{2G^2}{n}\sum_{t=1}^{T}\eta_t
  \]
  and
    \[
    \ebb_A\big[\|A(S)-A(\widetilde{S})\|_2^2\big]
     \leq \frac{8G^2}{Bn}\sum_{t=1}^{T}\eta_t^2+
     \frac{8G^2}{Bn}\Big(\frac{2}{n}+\frac{1}{m}\Big)\Big(\sum_{t=1}^{T}\eta_t\Big)^2
     + \frac{8G^2(B-1)}{n^2B}\Big(\sum_{t=1}^{T}\eta_t\Big)^2.
  \]
\end{corollary}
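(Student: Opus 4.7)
The plan is to invoke Theorem~\ref{thm:sgd} and reduce every term to a uniform bound on $\delta_{t,b}$. The extra $G$-Lipschitz hypothesis ensures $\|\nabla f(\bw;\bz)\|_2 \le G$ for all $\bw, \bz$, so the triangle inequality gives the deterministic bound
\[
\delta_{t,b} = \|\nabla f(\bw_{t}^{(b)};\bz_{n}) - \nabla f(\bw_{t}^{(b)};\tilde{\bz}_{n})\|_2 \le 2G.
\]
Since this estimate is uniform in $\bw_{t}^{(b)}$ and in the random indices, it holds almost surely and survives under any conditioning. The remainder of the argument is combinatorial: identify certain sums as moments of a binomial random variable.

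For the first inequality, I would substitute $\delta_{t,1} \le 2G$ into the first bound of Theorem~\ref{thm:sgd} and use $\ebb_A[\ibb[i_{1,j_{1,t}}=n]] = 1/n$. The latter holds because, conditional on $j_{1,t}\in[m]$, the index $i_{1,j_{1,t}}$ is uniform on $[n]$; this is exactly the identity highlighted in the text immediately after Lemma~\ref{lem:sgd}. The claimed bound $\frac{2G}{n}\sum_{t=1}^T \eta_t$ follows directly.

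For the second inequality I would bound each of the three terms in Theorem~\ref{thm:sgd} separately. In the first two terms, pulling the uniform estimate $\delta_{t,1}\le 2G$ out of the conditional expectations reduces the problem to evaluating
\[
\sum_{r=0}^m \frac{r\,C_m^r(n-1)^{m-r}}{n^m} = \ebb[X], \qquad
\sum_{r=0}^m \frac{r^2\,C_m^r(n-1)^{m-r}}{n^m} = \ebb[X^2],
\]
where $X := \sum_{k=1}^m \ibb[i_{1,k}=n] \sim \mathrm{Binomial}(m, 1/n)$. Hence $\ebb[X]=m/n$ and $\ebb[X^2] = \tfrac{m}{n}(1-\tfrac{1}{n}) + \tfrac{m^2}{n^2} \le \tfrac{m}{n} + \tfrac{m^2}{n^2}$. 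These identities turn the first two terms into $\frac{8G^2}{Bn}\sum_{t}\eta_t^2$ and at most $\frac{8G^2}{Bn}\bigl(\tfrac{1}{m}+\tfrac{1}{n}\bigr)\bigl(\sum_{t}\eta_t\bigr)^2$ respectively. For the third term, the first-moment estimate from the first part gives $\ebb_A\bigl[\sum_t \eta_t \delta_{t,1}\ibb[i_{1,j_{1,t}}=n]\bigr] \le \tfrac{2G}{n}\sum_t\eta_t$, so that term is at most $\frac{4G^2(B-1)}{Bn^2}\bigl(\sum_t\eta_t\bigr)^2$. Summing the three contributions and absorbing the minor slack in the constants (replacing $\tfrac{1}{n}+\tfrac{1}{m}$ by $\tfrac{2}{n}+\tfrac{1}{m}$ and loosening the last factor) yields the stated bound.

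The argument presents no real obstacle beyond careful bookkeeping: once $\delta_{t,b}$ is uniformly bounded and the sums over $r$ are recognised as moments of $\mathrm{Binomial}(m,1/n)$, everything is a routine calculation. The main point requiring attention is matching the stated numerical constants, because the tight moment computation is slightly sharper than what the corollary records and must be relaxed by elementary inequalities.
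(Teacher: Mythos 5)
Your proposal is correct and follows essentially the same route as the paper's own proof: bound $\delta_{t,1}\le 2G$ uniformly, substitute into Theorem~\ref{thm:sgd}, and use $\ebb_A[\ibb[i_{1,j_{1,t}}=n]]=1/n$ from Lemma~\ref{lem:exp-indicator} (note both you and the paper's proof obtain $\frac{2G}{n}\sum_t\eta_t$ for the first bound, so the $G^2$ in the corollary's statement appears to be a typo). The only computational divergence is that you evaluate $\sum_{r}r^2C_m^r(n-1)^{m-r}/(m^2n^m)$ as the second moment of a $\mathrm{Binomial}(m,1/n)$ variable, yielding $\frac{1}{nm}+\frac{1}{n^2}$, whereas the paper manipulates binomial-coefficient identities directly and gets the slightly looser $\frac{1}{nm}+\frac{2}{n^2}$; both land within the stated constants.
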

\begin{remark}
  Since $m\leq n$, we get
  \[
    \ebb_A\big[\|A(S)-A(\widetilde{S})\|_2^2\big]
     \lesssim \frac{G^2}{Bn}\sum_{t=1}^{T}\eta_t^2+
     G^2\Big(\frac{1}{Bmn}+\frac{1}{n^2}\Big)\Big(\sum_{t=1}^{T}\eta_t\Big)^2,
  \]
  where we denote $A\lesssim B$ if there exists a universal constant $C$ such that $A\leq CB$. We also denote $A\gtrsim B$ if there exists a universal constant $C$ such that $A\geq CB$. We denote $A\asymp B$ if $A\lesssim B$ and $A\gtrsim B$.
\end{remark}

We can combine the above stability bounds and Lemma \ref{lem:stab-gen} to derive generalization bounds in Theorem \ref{thm:gen-sgd}. We omit the proof for brevity.
\begin{theorem}[Generalization bounds\label{thm:gen-sgd}]
Let $A$ be the bootstrap SGD with $T$ iterations. Suppose for any $z$, the map $\bw\mapsto f(\bw;z)$ is convex, $G$-Lipschtiz continuous and $L$-smooth. Then
\[
  \ebb[F(A(S))-F_S(A(S))] \leq \frac{2G^2}{n}\sum_{t=1}^{T}\eta_t
\]
and
\begin{multline}\label{gen-sgd-b}
  \ebb[F(A(S))-F_S(A(S))] \lesssim \frac{LG^2}{Bn}\sum_{t=1}^{T}\eta_t^2+
     \Big(\frac{LG^2}{n^2}+\frac{LG^2}{Bmn}\Big)\Big(\sum_{t=1}^{T}\eta_t\Big)^2+\\
     G\big(L \ebb[F_S(A(S))]\big)^{\frac{1}{2}}\Big(\frac{1}{Bn}\sum_{t=1}^{T}\eta_t^2+
     \Big(\frac{1}{n^2}+\frac{1}{Bmn}\Big)\Big(\sum_{t=1}^{T}\eta_t\Big)^2\Big)^{\frac{1}{2}}.
\end{multline}
\end{theorem}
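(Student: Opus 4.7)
The plan is to derive both inequalities as direct corollaries of Lemma~\ref{lem:stab-gen} applied to the argument-stability parameters established in Corollary~\ref{cor:sgd}; no new estimate is needed beyond combining these two earlier results.

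For the first inequality, I start from the $\ell_1$-argument stability bound of Corollary~\ref{cor:sgd}, namely $\sup_{S\sim\widetilde{S}}\ebb_A[\|A(S)-A(\widetilde{S})\|_2]\leq \tfrac{2G^2}{n}\sum_{t=1}^{T}\eta_t$. Because $\bw\mapsto f(\bw;z)$ is $G$-Lipschitz, the first bullet of Lemma~\ref{lem:stab-gen} translates this $\ell_1$-stability parameter into a generalization bound of exactly the same order (up to the Lipschitz factor absorbed into the stated constant), giving the claimed estimate.

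For the second inequality, I use the $\ell_2$-argument stability bound of Corollary~\ref{cor:sgd}, which in the simplified form of the subsequent remark reads
\[
\epsilon^2 \;:=\; \sup_{S\sim\widetilde{S}}\ebb_A\bigl[\|A(S)-A(\widetilde{S})\|_2^2\bigr] \;\lesssim\; \frac{G^2}{Bn}\sum_{t=1}^{T}\eta_t^2 + G^2\Big(\frac{1}{n^2}+\frac{1}{Bmn}\Big)\Big(\sum_{t=1}^{T}\eta_t\Big)^2.
\]
Since $\bw\mapsto f(\bw;z)$ is also $L$-smooth, the second bullet of Lemma~\ref{lem:stab-gen} yields
\[
\ebb[F(A(S))-F_S(A(S))] \;\leq\; \frac{L\epsilon^2}{2} + \epsilon\,\bigl(2L\,\ebb[F_S(A(S))]\bigr)^{1/2}.
\]
Plugging the displayed bound on $\epsilon^2$ into the first term reproduces the first two summands on the right-hand side of~\eqref{gen-sgd-b}, while substituting $\epsilon=\sqrt{\epsilon^2}$ into the second term, keeping the whole bound under a single square root, reproduces the final cross term of~\eqref{gen-sgd-b} after absorbing numerical constants into $\lesssim$.

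Since all the real work has already been carried out in Lemma~\ref{lem:sgd}, Theorem~\ref{thm:sgd}, and Corollary~\ref{cor:sgd}, the argument is essentially mechanical; this is why the authors felt comfortable omitting it. The only point requiring mild care is consistency of hypotheses: one should check that the step-size condition $\eta_t\leq 1/L$ needed to invoke Corollary~\ref{cor:sgd} (inherited from Lemma~\ref{lem:sgd}) is compatible with the $L$-smoothness assumption used here to apply the second bullet of Lemma~\ref{lem:stab-gen}, and that the convexity hypothesis carries over from Lemma~\ref{lem:sgd}. No other obstacle is anticipated.
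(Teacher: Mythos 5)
Your proposal is correct and is exactly the argument the paper intends: the authors state just before Theorem~\ref{thm:gen-sgd} that it follows by combining the stability bounds of Corollary~\ref{cor:sgd} with Lemma~\ref{lem:stab-gen}, and omit the proof for brevity. The only slight wrinkle is in the first inequality, where the $\ell_1$-argument stability parameter should be taken as $\epsilon=\frac{2G}{n}\sum_{t=1}^{T}\eta_t$ (as actually derived in the proof of Corollary~\ref{cor:sgd}) so that the first bullet of Lemma~\ref{lem:stab-gen} gives exactly $G\epsilon=\frac{2G^2}{n}\sum_{t=1}^{T}\eta_t$; since that bound carries an explicit constant rather than a $\lesssim$, one cannot simply ``absorb'' an extra Lipschitz factor as you suggest.
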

\begin{remark}
  If we choose $\eta_t\asymp 1/\sqrt{T}$ and $T\asymp n$, then Eq.~\eqref{gen-sgd-b} further implies
  \[
  \ebb[F(A(S))-F_S(A(S))] \lesssim
     \frac{LG^2}{n}+\frac{LG^2}{Bm}+
     G\big(L \ebb[F_S(A(S))]\big)^{\frac{1}{2}}\Big(
     \frac{1}{n}+\frac{1}{Bm}\Big)^{\frac{1}{2}},
  \]
  which is a decreasing function of $Bm$. This shows that the bootstrap helps improve the generalization behavior of SGD, which is reasonable since sampling and averaging intuitively improves the stability of a learning algorithm. The above generalization bound also involves the empirical risk and therefore can benefit from a small training error. That is, if $\ebb[F_S(A(S))]$ is very small, then the generalization bound can improve to the order of $\frac{LG^2}{n}+\frac{LG^2}{Bm}$.
\end{remark}

\subsection{Stability Analysis for Bootstrap SGD of Type 2}
In this section, we consider stability and generalization bounds for bootstrap SGD of Type 2. The following theorem gives the stability bounds. The proofs of results in this subsection are given in Section~\ref{sec:proof-type2}.
\begin{theorem}[Stability bound\label{thm:stab-type2}]
Let $(\bw_t^{(b)})$ and $(\tilde{\bw}_t^{(b)})$ be produced by bootstrap SGD based on $S$ and $\widetilde{S}$, respectively. Denote by $h^{(2)}_S$ the bootstrap output of Type 2. Assume the map $\bw\mapsto f(\bw;\bz)$ is convex and $L$-smooth. Assume $h_{\bw}$ is $G_h$-Lipschitz continuous in the sense that
\begin{equation}\label{lipschitz-h}
\big|h_{\bw}(x)-h_{\bw'}(x)\big|\leq G_h\|\bw-\bw'\|_2,\quad\forall \bx\in\xcal.
\end{equation}
If $\eta_t\leq 1/L$, then
\begin{equation}\label{stab-type2-a}
\ebb_A\big[\sup_{\bx}\big|h^{(2)}_S(\bx)-h^{(2)}_{\widetilde{S}}(\bx)\big|\big] \leq
G_h\sum_{t=1}^{T}\eta_t\ebb_A[\delta_{t,1}\ibb[i_{1,j_{1,t}}=n]]
\end{equation}
and
\begin{multline*}
  \ebb_A\big[\sup_{\bx}\big(h^{(2)}_S(\bx)-h^{(2)}_{\widetilde{S}}(\bx)\big)^2\big] \leq
  \frac{2G_h^2}{mB}\sum_{r=1}^{m}\frac{rC_m^r(n-1)^{m-r}}{n^m}\cdot\ebb_A\Big[\sum_{t=1}^{T}\eta_t^2\delta_{t,1}^2\big|\sum_{k=1}^m\ibb[i_{1,k}=n]=r\Big]+\\
  \frac{2G_h^2}{m^2B}\sum_{r=0}^{m}\frac{r^2C_m^r(n-1)^{m-r}}{n^m}\cdot\ebb_A\Big[\Big(\sum_{t=1}^{T}\eta_t\delta_{t,1}\Big)^2\big|\sum_{k=1}^m\ibb[i_{1,k}=n]=r\Big]+ G_h^2(1-\frac{1}{B})\Big(\ebb_A\Big[\sum_{t=1}^{T}\eta_t\delta_{t,1}\ibb[i_{1,j_{1,t}}=n]\Big]\Big)^2.
\end{multline*}
\end{theorem}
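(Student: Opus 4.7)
The plan is to reduce the functional deviation to a parameter deviation through the Lipschitz hypothesis~\eqref{lipschitz-h} on $h_\bw$, invoke Lemma~\ref{lem:sgd} to obtain a pathwise bound in terms of gradient discrepancies, and then exploit the mutual independence of the $B$ bootstrap runs when expanding the second moment. Let $X_b:=\sum_{t=1}^{T}\eta_t\,\delta_{t,b}\,\ibb[i_{b,j_{b,t}}=n]$. Combining the triangle inequality, \eqref{lipschitz-h}, and Lemma~\ref{lem:sgd} yields the uniform pathwise estimate
\[
\sup_{\bx}\bigl|h^{(2)}_S(\bx)-h^{(2)}_{\widetilde{S}}(\bx)\bigr|
\le \frac{1}{B}\sum_{b=1}^{B}\sup_{\bx}\bigl|h_{\bw^{(b)}}(\bx)-h_{\tilde{\bw}^{(b)}}(\bx)\bigr|
\le \frac{G_h}{B}\sum_{b=1}^{B}\bigl\|\bw^{(b)}_{T+1}-\tilde{\bw}^{(b)}_{T+1}\bigr\|_2
\le \frac{G_h}{B}\sum_{b=1}^{B}X_b.
\]
The first claim~\eqref{stab-type2-a} is immediate from this: the bootstrap samples and SGD selections are exchangeable across $b$, so $\ebb_A[X_b]=\ebb_A[X_1]$ for every $b$, and the factor of $B$ cancels the $1/B$.

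For the squared bound, I square the pathwise inequality, take $\ebb_A[\cdot]$, and expand
\[
\ebb_A\Bigl[\Bigl(\tfrac{1}{B}\sum_b X_b\Bigr)^{\!2}\Bigr]
=\frac{1}{B^2}\sum_{b}\ebb_A[X_b^2]+\frac{1}{B^2}\sum_{b\ne b'}\ebb_A[X_bX_{b'}].
\]
For $b\ne b'$, $X_b$ and $X_{b'}$ are measurable with respect to disjoint independent components of the bootstrap/SGD randomness, hence independent and identically distributed, so $\ebb_A[X_bX_{b'}]=(\ebb_A[X_1])^2$. The right-hand side collapses to $\tfrac{1}{B}\ebb_A[X_1^2]+(1-\tfrac{1}{B})(\ebb_A[X_1])^2$, which after multiplication by $G_h^2$ already produces the third summand of the target inequality; it remains to bound $\ebb_A[X_1^2]$ by the two binomial-weighted conditional expectations in the statement.

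The main obstacle is the correlation between $\delta_{t,1}$ and $\alpha_t:=\ibb[i_{1,j_{1,t}}=n]$, both of which are built from the same SGD trajectory on the bootstrap sample. I would handle this by conditioning on the bootstrap sample $I_1$ (which fixes $r:=\sum_{k=1}^{m}\ibb[i_{1,k}=n]$) and on the selection history $j_{1,1},\dots,j_{1,t-1}$: under this conditioning $\delta_{t,1}$ is deterministic while $\alpha_t$ has mean $r/m$ and variance $(r/m)(1-r/m)\le r/m$, and the differences $\{\alpha_t-r/m\}_t$ form a martingale-difference sequence with respect to the natural filtration. Writing
\[
X_1=\sum_{t=1}^{T}\eta_t\,\delta_{t,1}\bigl(\alpha_t-\tfrac{r}{m}\bigr)+\frac{r}{m}\sum_{t=1}^{T}\eta_t\,\delta_{t,1}
\]
and applying $(a+b)^2\le 2a^2+2b^2$ splits $X_1^2$ into two parts. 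In the first square, all cross terms vanish by the martingale-difference property, leaving a diagonal bounded by $(r/m)\sum_t\eta_t^2\delta_{t,1}^2$; the second square is exactly $(r/m)^2(\sum_t\eta_t\delta_{t,1})^2$. Taking total expectation, expressing the average over $r$ against the binomial law $\pr[r=j]=C_m^{\,j}(n-1)^{m-j}/n^{m}$, and multiplying by $G_h^2/B$ reproduces the first two summands in the statement, completing the proof.
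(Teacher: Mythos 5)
Your proposal is correct and follows essentially the same route as the paper: reduce the functional deviation to $\frac{G_h}{B}\sum_b\|\bw_{T+1}^{(b)}-\tilde{\bw}_{T+1}^{(b)}\|_2$ via the Lipschitz assumption, then apply the Type-1 machinery (independence across bootstrap runs for the cross terms, and the expectation--variance decomposition of $X_1$ around the conditional mean $r/m$ with vanishing martingale-difference cross terms). The only difference is presentational: the paper simply cites Eq.~\eqref{sgd-1}, Eq.~\eqref{l2-stab-1} and the proof of Theorem~\ref{thm:sgd} for the bound on $\ebb_A[X_1^2]$, whereas you re-derive that argument inline.
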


Theorem~\ref{thm:stab-type2} requires $h$ to be Lipschitz continuous. Below we give two examples of such $h$.
\begin{example}[Linear models]
    Consider a linear model $h_{\bw}(x)=\bw^\top x$. Then, it is clear that
    \[
    |h_{\bw}(x)-h_{\bw'}(x)|=\big|(\bw-\bw')^\top x\big|\leq \|\bw-\bw'\|_2\sup_{x\in\xcal}\|x\|_2.
    \]
    Therefore, Eq.~\eqref{lipschitz-h} holds with $G_h=\sup_{x\in\xcal}\|x\|_2$. It is clear that Eq.~\eqref{lipschitz-h} also holds for kernel models with bounded kernels.
\end{example}

\begin{example}[Shallow neural networks]
    Consider a shallow neural network (SNN) with $N$ nodes in the hidden layer
    \[
    h_{\bW}(x)=\sum_{j=1}^N\alpha_j\sigma(\bw_j^\top x),
    \]
    where $\alpha_j\in\{-1/\sqrt{N},1/\sqrt{N}\}$ are fixed, $\sigma:\rbb\mapsto\rbb$ is an activation function and $\bW=(\bw_1,\ldots,\bw_N)$ are trainable weights with $\bw_j\in\rbb^d$. If $\sigma$ is $G_\sigma$-Lipschitz continuous, then, for any $\bW,\bW'$, we have
    \begin{align*}
      \big|h_{\bW}(x)-h_{\bW'}(x)\big| & = \Big|\sum_{j=1}^N\alpha_j\big(\sigma(x^\top\bw_j)-\sigma(x^\top\bw_j')\big)\Big|
       \leq \frac{1}{\sqrt{N}}\sum_{j=1}^{N}\big|\sigma(x^\top\bw_j)-\sigma(x^\top\bw_j')\big|\\
       & \leq \frac{G_\sigma}{\sqrt{N}}\sum_{j=1}^{N}\big|x^\top\big(\bw_j-\bw_j'\big)\big|
       \leq \frac{G_\sigma\sup_{x}\|x\|_2}{\sqrt{N}}\sum_{j=1}^{N}\big\|\bw_j-\bw_j'\big\|_2\\
       & \leq \frac{G_\sigma\sup_{x}\|x\|_2}{\sqrt{N}}\sqrt{N}\Big(\sum_{j=1}^{N}\big\|\bw_j-\bw_j'\big\|_2^2\Big)^{\frac{1}{2}}\\
       & = G_\sigma\sup_{x}\|x\|_2\|\bW-\bW'\|_2,
    \end{align*}
    where $\|\cdot\|_2$ denotes the Frobenius norm of a matrix.
    Therefore, Eq.~\eqref{lipschitz-h} holds for SNNs with $G_h=G_\sigma\sup_{x}\|x\|_2$. Although a SNN is not convex, it was shown that it is $\mu$-weakly convex with the weak convexity parameter $\mu$ (the smallest eigenvalue of a Hessian matrix) being of the order of $1/\sqrt{N}$~\citep{richards2021stability,taheri2024generalization}. Then, our stability analysis still applies for sufficiently large $N$~\citep{lei2022stability,richards2021stability}.
\end{example}

As a corollary, we present the following generalization bounds. It shows that bootstrap SGD of Type 2 enjoys similar stability and generalization bounds of bootstrap SGD of Type 1.
\begin{corollary}[Generalization bound\label{cor:gen-type2}]
Let assumptions in Theorem \ref{thm:stab-type2} hold. Assume $a\mapsto\ell(a,y)$ is $G_\ell$-Lipschitz continuous for any $y$ and $\bw\mapsto f(\bw;z)$ is $G$-Lipschitz continuous for any $z$. Then
\begin{equation}\label{gen-type2-a}
\ebb_A[\ell(h_S^{(2)}(x),y)]-\frac{1}{n}\sum_{i=1}^{n}\ebb_A[\ell(h_S^{(2)}(x_i),y_i)]\leq\frac{2GG_hG_\ell}{n}\sum_{t=1}^{T}\eta_t.
\end{equation}
Furthermore, if for any $y$, the function $a\mapsto\ell(a,y)$ is $L_\ell$-smooth, then
\begin{multline}\label{gen-type2-b}
  \ebb_A[\ell(h_S^{(2)}(x),y)]-\frac{1}{n}\sum_{i=1}^{n}\ebb_A[\ell(h_S^{(2)}(x_i),y_i)] \lesssim G_\ell GG_h\bigg(\frac{1}{Bn}\sum_{t=1}^{T}\eta_t^2+
     \Big(\frac{1}{Bmn}+\frac{1}{n^2}\Big)\Big(\sum_{t=1}^{T}\eta_t\Big)^2\bigg)^{\frac{1}{2}}\\
     +\frac{G^2G_h^2L_\ell}{Bn}\sum_{t=1}^{T}\eta_t^2+
     G^2G_h^2L_\ell\Big(\frac{1}{Bmn}+\frac{1}{n^2}\Big)\Big(\sum_{t=1}^{T}\eta_t\Big)^2.
\end{multline}
\end{corollary}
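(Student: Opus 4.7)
The plan is to transfer the stability of the aggregated predictor $h_S^{(2)}$ established in Theorem~\ref{thm:stab-type2} into stability of the composed loss $z\mapsto\ell(h_S^{(2)}(x),y)$, and then invoke the standard uniform-stability-implies-generalization reduction that underpins Lemma~\ref{lem:stab-gen}.

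For the Lipschitz case, Eq.~\eqref{gen-type2-a}, I would first use $G_\ell$-Lipschitzness of $a\mapsto\ell(a,y)$ to obtain, pointwise in $z=(x,y)$,
\[
\big|\ell(h_S^{(2)}(x),y)-\ell(h_{\widetilde S}^{(2)}(x),y)\big|\le G_\ell\sup_{x'}\big|h_S^{(2)}(x')-h_{\widetilde S}^{(2)}(x')\big|.
\]
Taking $\ebb_A$ and combining with Eq.~\eqref{stab-type2-a} gives a uniform stability parameter of $G_\ell G_h\sum_t\eta_t\,\ebb_A[\delta_{t,1}\ibb[i_{1,j_{1,t}}=n]]$ for the composed loss. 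The triangle inequality plus $G$-Lipschitzness of $f(\cdot;z)$ yield $\delta_{t,b}\le 2G$, while Lemma~\ref{lem:exp-indicator} gives $\ebb_A[\ibb[i_{1,j_{1,t}}=n]]=1/n$, so the uniform stability is at most $(2GG_\ell G_h/n)\sum_t\eta_t$. The Bousquet--Elisseeff reduction (identical in form to the first part of Lemma~\ref{lem:stab-gen}) then produces Eq.~\eqref{gen-type2-a}.

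For the smooth case, Eq.~\eqref{gen-type2-b}, I would replace the Lipschitz estimate by the self-bounding inequality
\[
\big|\ell(a,y)-\ell(b,y)\big|\le G_\ell|a-b|+\tfrac{L_\ell}{2}(a-b)^2,
\]
which holds whenever $\ell(\cdot,y)$ is both $G_\ell$-Lipschitz and $L_\ell$-smooth. Setting $a=h_S^{(2)}(x)$, $b=h_{\widetilde S}^{(2)}(x)$, passing to $\sup_{x}$ and $\ebb_A$, and using Cauchy--Schwarz $\ebb_A\sup_{x}|h_S^{(2)}-h_{\widetilde S}^{(2)}|\le\bigl(\ebb_A\sup_{x}(h_S^{(2)}-h_{\widetilde S}^{(2)})^2\bigr)^{1/2}$ reduces the task to bounding the $\ell_2$-prediction stability, which is exactly the quantity controlled by the second half of Theorem~\ref{thm:stab-type2}. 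To extract an explicit bound I would follow the manipulation used for Corollary~\ref{cor:sgd}: insert $\delta_{t,b}\le 2G$ into each conditional expectation and evaluate the combinatorial moments
\[
\sum_{r=1}^m r\binom{m}{r}\frac{(n-1)^{m-r}}{n^m}=\frac{m}{n},\qquad \sum_{r=0}^m r^2\binom{m}{r}\frac{(n-1)^{m-r}}{n^m}=\frac{m}{n}+\frac{m(m-1)}{n^2},
\]
which are simply the first two moments of a $\mathrm{Binomial}(m,1/n)$ variable. These collapse the three terms in Theorem~\ref{thm:stab-type2} to an $\ell_2$-prediction-stability bound of order $G^2G_h^2\bigl[(Bn)^{-1}\sum_t\eta_t^2+((Bmn)^{-1}+n^{-2})(\sum_t\eta_t)^2\bigr]$. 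Substituting back, the linear part of the self-bounding inequality contributes the $G_\ell GG_h\sqrt{\,\cdot\,}$ first term of Eq.~\eqref{gen-type2-b} and the quadratic part contributes the $L_\ell G^2G_h^2(\cdot)$ remainder; a final appeal to the Bousquet--Elisseeff reduction closes the argument.

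The main obstacle is essentially bookkeeping: one must verify that the self-bounding inequality plays well with the symmetrization step behind the stability-to-generalization reduction, and that the combinatorial moments of the multinomial bootstrap counts collapse cleanly to the $m/n$ and $m/n+m^2/n^2$ expressions above. Both are routine given the Type~1 calculation in Corollary~\ref{cor:sgd}.
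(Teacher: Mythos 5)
Your proposal is correct and follows essentially the same route as the paper: a leave-one-out symmetrization to reduce generalization to stability of the composed loss, the Lipschitz property plus Eq.~\eqref{stab-type2-a} for the first bound, and for the second bound the smoothness expansion (your self-bounding inequality is equivalent to the paper's use of $\ell'$ bounded by $G_\ell$) combined with Cauchy--Schwarz and the $\ell_2$-prediction stability of Theorem~\ref{thm:stab-type2} collapsed via $\delta_{t,b}\le 2G$ and the binomial moments. The only cosmetic difference is that you evaluate the moments $m/n$ and $m/n+m(m-1)/n^2$ exactly where the paper uses a combinatorial upper bound; both yield the stated rates.
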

\begin{remark}
  If we choose $\eta_t\asymp 1/\sqrt{T}$ and $T\asymp n$, then Eq.~\eqref{gen-type2-a} implies that
  \[
  \ebb_A[\ell(h_S^{(2)}(x),y)]-\frac{1}{n}\sum_{i=1}^{n}\ebb_A[\ell(h_S^{(2)}(x_i),y_i)]\lesssim\frac{GG_hG_\ell}{\sqrt{n}},
  \]
  which, however, does not show the effect of the bootstrap. For the same step size and iteration number, Eq.~\eqref{gen-type2-b} shows that
  \[
  \ebb_A[\ell(h_S^{(2)}(x),y)]-\frac{1}{n}\sum_{i=1}^{n}\ebb_A[\ell(h_S^{(2)}(x_i),y_i)] \lesssim G_\ell GG_h
     \Big(\frac{1}{Bm}+\frac{1}{n}\Big)^{\frac{1}{2}}+
     G^2G_h^2L_\ell\Big(\frac{1}{Bm}+\frac{1}{n}\Big).
  \]
  This shows that increasing $B$ would improve the generalization behavior of the output model.
\end{remark}

\section{Distribution-free Confidence and Tolerance Intervals for Bootstrap SGD of Type 3\label{sec:confidence}}

It is well-known that distribution-free confidence intervals for quantiles  and distribution-free tolerance intervals can be constructed by certain
intervals, where the endpoints are defined by order statistics, see
\citet[Chap.~7]{DavidNagaraja2003}. We will use this result for the
special case of the median, although the generalization to other quantiles
is possible in the same manner.

Table \ref{tab:confidenceintervals} lists some values of $B$, the
corresponding pair of order statistics determining the confidence
interval, the lower bound of the actual confidence level which is
$0.5^{B}\,\sum_{j=r}^{s} ( \,_{j}^{B})$, and the finite sample
breakdown point (see Definition \ref{breakdownpoint})
$\varepsilon_B^*=\min\{r-1, B-s \}/B$ of the confidence interval.

\begin{table}[H]
\begin{center}
\begin{tabular}[t]{|r|r|r|r|c|c|}
\hline
$1-\alpha$ & $B$   &  $r$ & $s$   & lower bound of   & finite sample  \\
           &   &   &       & confidence level & breakdown point \\
\hline
0.90  & 8 & 2 & 7 & 0.930 & 0.125 \\
 & 10 & 2 & 9 & 0.979 & 0.100 \\
 & 18 & 6 & 13 & 0.904 & 0.278  \\
 & 30 & 11 & 20 & 0.901 & 0.333 \\
 & 53 & 21 & 33 & 0.902 & 0.377 \\
 & 104 & 44 & 61 & 0.905 & 0.413 \\
\hline
0.95 & 9 & 2 & 8 & 0.961 & 0.111 \\
 & 10 & 2 & 9 & 0.979 & 0.100  \\
 & 17 & 5 & 13 & 0.951 & 0.235 \\
 & 37 & 13 & 25 & 0.953 & 0.324 \\
 & 51 & 19 & 33 & 0.951 & 0.353 \\
 & 101 & 41 & 61 & 0.954 & 0.396 \\
\hline
0.99 & 10 & 1 & 10 & 0.998 & 0.000  \\
 & 12 & 2 & 11 & 0.994 & 0.083 \\
 & 26 & 7 & 20 & 0.991 & 0.231 \\
 & 39 & 12 & 28 & 0.991 & 0.282 \\
 & 49 & 16 & 34 & 0.991 & 0.306 \\
 & 101 & 38 & 64 & 0.991 & 0.366 \\
\hline
\end{tabular}
\caption{Selected pairs $(r,s)$ of order statistics for non-parametric confidence intervals at the
     $(1-\alpha)$-level for the median. \label{tab:confidenceintervals}}
\end{center}
\end{table}

The finite sample breakdown point proposed by \citet{DonohoHuber1983} measures the
worst case behaviour of a statistical estimator. We use the replacement version of this breakdown point, see \citet[p.98]{HampelRonchettiRousseeuwStahel1986}.
 If two estimators are compared w.r.t. the finite sample breakdown point, the one with the higher value  is more robust than the other one.
The influence function proposed by \citet{Hampel1968,Hampel1974}
measures the impact on the estimation due to an infinitesimal small contamination of the distribution $P$ in the direction of a Dirac-distribution.

\begin{definition}[Finite-sample breakdown point]\label{breakdownpoint}
Let $S_n=(z_{1},\ldots, z_{n})$ be a data set with values in
$\zcal$.
The finite-sample breakdown point of learning method $A$ for the data set $S_{n}$ is defined by
\begin{equation}
\varepsilon_n^*(A(S_{n})) = \max \left\{
                              \frac{m}{n} \,; \mathrm{Bias}(m; A(S_{n})) \mathrm{ ~~is~finite}
                         \right \} \, ,
\end{equation}
where
\begin{equation}
 \mathrm{Bias}(m; A(S_{n})) = \sup_{\mathcal{S}_n^{\prime}} \|
        A(S_{n}^{\prime}) - A(S_{n}) \|_2  \label{sec2:bias}
\end{equation}
and the supremum is over all possible samples $S_n^{\prime}$ that can be obtained
by replacing \emph{any} $m$ of the original data points by \emph{arbitrary} values in $\zcal$.
\end{definition}

Table \ref{tab:toleranceintervals} lists pairs $(r,s)$ of indices for order statistics
$(X_{r},X_{s})$ which yield distribution-free tolerance intervals
$[X_{(r)},X_{(s)}]$ such that at least a proportion of
 $\gamma$ of the population is covered with probability $\beta$.
 Here, $s=B-r+1$. In the table the guaranteed lower bounds of the probability and the finite sample breakdown points are given, too. 

 \begin{table}[H]
 \caption{The pairs of order statistics
$X_{(r)}$ and $X_{(s)}$ from $X_{1},\ldots,X_{B}$  yield distribution-free tolerance intervals
$[X_{(r)},X_{(s)}]$, i.e.
$\mathrm{Pr}(\int_{[X_{(r)},X_{(s)}]} \,dP \ge \gamma)\ge \beta$. The final sample breakdown points are $\min\{r-1, B-s+1\}/B$.   \label{tab:toleranceintervals}}
  \begin{center}
 \begin{tabular}[t]{|l|l|r|r|r|r|r|}
 \hline \hline
 $\gamma$ & $\beta$ & $B$ & $r$ & $s$ & prob $\ge$ & finite sample  \\
          &         &     &     &     &            & breakdown point \\
\hline
   0.9  & 0.9   & 38 &  1 & 38 & 0.9047 & 0 \\
   0.9  & 0.95  & 46 &  1 & 46 & 0.9519 & 0 \\
   0.9  & 0.99  & 64 &  1 & 64 & 0.9904 & 0 \\
   0.9  & 0.999 & 89 &  1 & 89 & 0.9990 & 0 \\
\hline
    0.95  & 0.90 &  77 &   1 & 77 & 0.9026 & 0\\
    0.95  & 0.95 &  93 &   1 & 93 & 0.9500 & 0\\
    0.95  & 0.99 & 130 &   1 & 130 & 0.9900 & 0\\
    0.95 & 0.999 & 181 &   1 & 181 & 0.9990 & 0\\
\hline
     0.99  &  0.90  & 388 &   1 & 388 & 0.9003 & 0\\
     0.99  &  0.95  & 473 &   1 & 473 & 0.9502 & 0\\
     0.99  &  0.99  & 662 &   1 & 662 & 0.9900 & 0\\
     0.99 &  0.999  & 920 &   1 & 920 & 0.9990 & 0\\
 \hline \hline
   0.9  & 0.9   & 65  &  2 & 64  & 0.9004 & 0.031   \\
   0.9  & 0.95  & 76  &  2 & 75  & 0.9530 & 0.026  \\
   0.9  & 0.99  & 97  &  2 & 96  & 0.9901 & 0.021  \\
   0.9  & 0.999 & 126 &  2 & 125 & 0.9990 & 0.016  \\
\hline
    0.95  & 0.90 & 132 &   2 & 131 & 0.9007 & 0.015  \\
    0.95  & 0.95 & 153 &   2 & 152 & 0.9505 & 0.013 \\
    0.95  & 0.99 & 198 &   2 & 197 & 0.9902 & 0.010  \\
    0.95 & 0.999 & 257 &   2 & 256 & 0.9990 & 0.008  \\
\hline
     0.99  &  0.90  &  667 &   2 &  666 & 0.9004 & 0.003\\
     0.99  &  0.95  &  773 &   2 &  772 & 0.9500 & 0.003  \\
     0.99  &  0.99  & 1001 &   2 & 1000 & 0.9900 & 0.002 \\
     0.99 &  0.999  & 1302 &   2 & 1301 & 0.9990 & 0.002 \\
\hline \hline
   0.9  & 0.9   & 164 &  6 & 159 & 0.9037  & 0.037\\
   0.9  & 0.95  & 179 &   6 & 174 & 0.9514 & 0.034\\
   0.9  & 0.99  & 210 &   6 & 205 & 0.9902 & 0.029\\
   0.9  & 0.999 & 249 &   6 & 244 & 0.9990 & 0.024 \\
\hline
    0.95  & 0.90 &  330 &   6 & 325 & 0.9017 & 0.018\\
    0.95  & 0.95 &  361 &   6 & 356 & 0.9505 & 0.017 \\
    0.95  & 0.99 &  425 &   6 & 420 & 0.9901 & 0.014 \\
    0.95 & 0.999 &  505 &   6 & 500 & 0.9990 & 0.012 \\
\hline
     0.99  &  0.90  & 1658 &    6 & 1653 & 0.9004 & 0.004 \\
     0.99  &  0.95  & 1818 &    6 & 1813 & 0.9501 & 0.003 \\
     0.99  &  0.99  & 2144 &    6 & 2139 & 0.9900 & 0.003 \\
     0.99 &  0.999 & 2552 &    6 & 2547 & 0.9990 & 0.002 \\
 \hline \hline
\end{tabular}
\end{center}
\end{table}
\ACe


It is possible to construct distribution-free prediction intervals based on order statistics in a very similar manner. We refer for details to \citet[Chap.~7.3]{DavidNagaraja2003}.

\section{Numerical Example for the 3 Types\label{sec:example}}
In this section, we present experimental results to show the behavior of bootstrap  SGD  with different types.

For illustration purposes only, let us demonstrate how the bootstrap methods behave in a one dimensional toy example.
The data set is generated as follows. The sample size is $n=1000$.
The input values $x_{i}$ are generated in an i.i.d. manner from a uniform distribution on the interval $[0, 33]$.
The output values $y_{i}$ are generated in the usual signal plus noise manner, i.e. as realisations of stochastically independent random variables $Y_{i}$ such that $y_{i}$ is the observed value $Y_{i}|(X_{i}=x_{i})$ having distribution
$f(x_{i}) + \varepsilon_{i}(x)$. To consider several cases of the ``true'' function
$f$ and of the error distribution $\varepsilon$ simultaneously, we have chosen
\begin{eqnarray*}
 f(x)  & = &
\begin{cases}
        0.7 x       & \textrm{~~if~} x  \in [0,3] \\
        10+x+\frac{1}{100}\sin(10x) x^4        & \textrm{~~if~} x  \in (3,6] \\
        5   & \textrm{~~if~} x  \in (6,30] \\
        -20-0.4(x-27)^2       & \textrm{~~if~} x  \in (30,33] \\
\end{cases}
\end{eqnarray*}
and
\begin{eqnarray*}
 \varepsilon(x) & \sim &
\begin{cases}
         \textrm{Unif}(-1,+1)      & \textrm{~~if~} x  \in [0,3] \\
        \textrm{Exp}(0.5)-\textrm{median}(\textrm{Exp}(0.5))
                  & \textrm{~~if~} x  \in (3,6] \\
         \textrm{Cauchy}(0,1) & \textrm{~~if~} x  \in (6,30] \\
         N(0,4)      & \textrm{~~if~} x  \in (30,33]. \\
\end{cases}
\end{eqnarray*}
Obviously, the function $f$ is non-continuous and therefore not an element of the RKHS of any Gaussian RBF kernel. Furthermore the function is constant, linear, polynomial of order $2$, and more complex in different intervals.
We used the uniform distribution on the interval $(-1,1)$ as an example
of a symmetrical distribution with a compact support, the Gaussian distribution as an example of a symmetrical distribution with thin tails, the Cauchy distribution with median $0$ and scale parameter $1$ as an extreme example of a symmetrical distribution with heavy tails, and the exponential distribution shifted by its median as an example of a skewed distribution with median equal to zero.

We used the classical Gaussian radial basis function kernel with $K(x,x')=\exp\big(-(x-x')^2/(2\sigma^2)\big)$  and $\sigma=1/\sqrt{20}$.  We used simple random sampling with replacement to generate $B=101$ bootstrap samples. For the $b$-th bootstrap sample, we applied SGD with $\eta_t=10/\sqrt{t}$ and $400$ passes over the data to generate a model $\bw_{T+1}^{(b)}$, which was then used to compute the predicted output $h_{\bw_T}^{(b)}$. We then applied the bootstrap methods in Definition~\ref{def:type} to compute the final prediction. Since kernel models are linear in the corresponding reproducing kernel Hilbert spaces, Type 1 and Type 2 are the same in our setup.

In Figure~\ref{fig:type2}, we illustrate the behavior of bootstrap methods of Type 2. We also plot the  pointwise  $0.95$-confidence interval predictions on the dataset to show the variation of the predictions.

In Figure~\ref{fig:type3}, we illustrate the behavior of bootstrap methods of Type 3. As we used $B=101$, we used $r=41$ and $s=61$ to construct  pointwise  distribution free confidence intervals with $1-\alpha=0.95$, see Table \ref{tab:confidenceintervals}.

It is clear that for this toy example both methods yield very similar results
from an applied point of view and that both bootstrap methods are robust in producing reliable predictions.
We like to emphasize that we had to zoom in on the $y$-axis, as we even allowed
Cauchy-distributed errors. The minimum and maximum of the $y$-values in the data set are $-261.69$ and $1726.99$, respectively. Of course, all computations and all bootstrap replications were done
using all data points, even these extreme values. 

\begin{figure}
  \centering
  \includegraphics[width=0.7\textwidth]{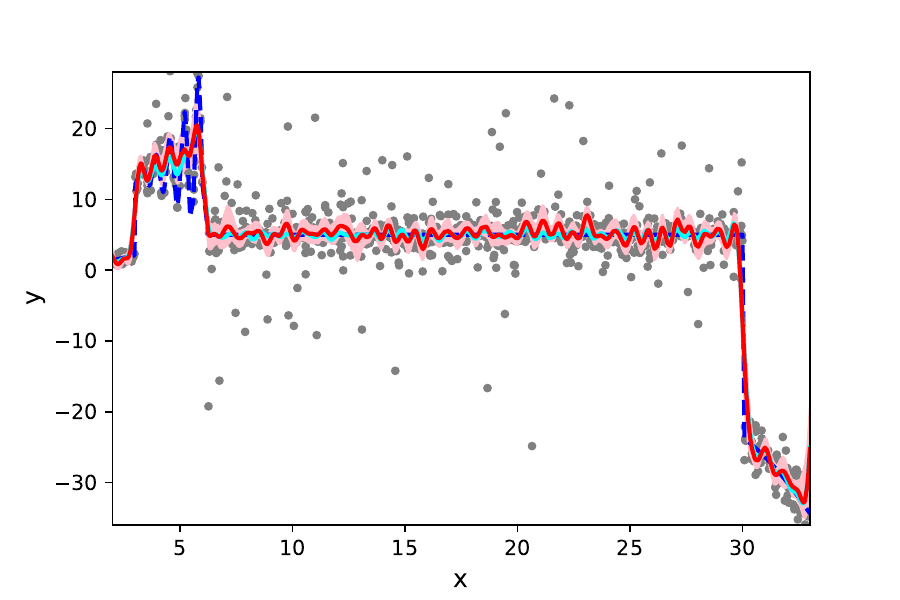}
  \caption{Behavior of Bootstrap Method of Type 2, $n=1000$ and $B=101$. The shaded scattered points show the training examples, the blue plot shows the true function, the cyan line shows the SGD approximation of the true data, the pink area shows the  pointwise $0.95$-confidence intervals, and the red line shows the average of the bootstrap approximations.}\label{fig:type2}
\end{figure}

\begin{figure}
  \centering
  \includegraphics[width=0.7\textwidth]{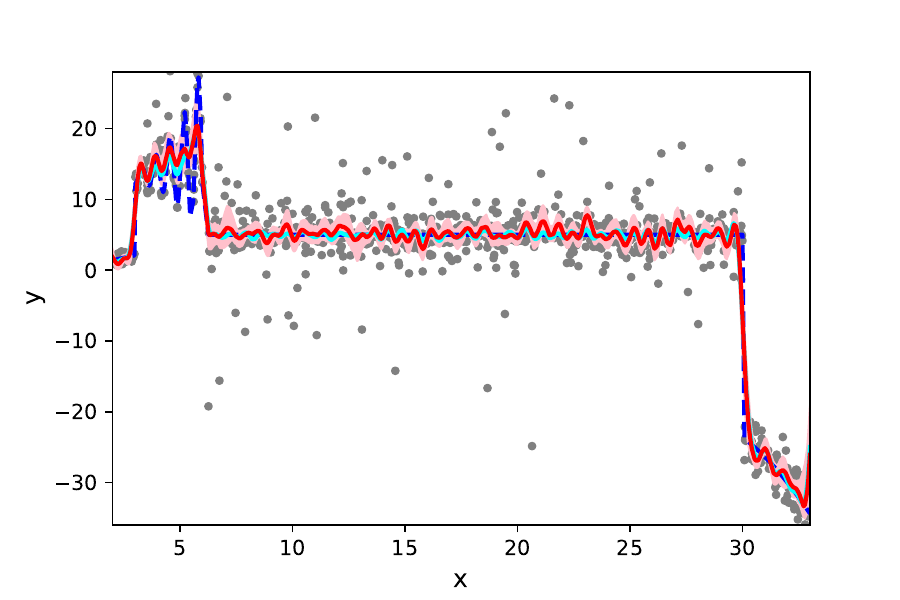}
  \caption{Behavior of Bootstrap Method of Type 3, $n=1000$ and $B=101$. The shaded scattered points show the training examples, the blue plot shows the true function, the cyan line shows the SGD approximation of the true data, the pink area shows the  pointwise $0.95$-confidence intervals for the median, and the red line shows the median of the bootstrap approximations.}\label{fig:type3}
\end{figure}

\section{Discussion\label{sec:discussion}}
In this paper three methods to use the empirical bootstrap approach for
SGD to minimize the empirical risk over a separable Hilbert space were investigated from the view point of
algorithmic stability and statistical robustness.
In Type 1 and Type 2 one simply computes the average from the bootstrap
approximations which yield
$h_S^{(1)}=h_{\bar{\bw}}$, where
$\bar{\bw}=\frac{1}{B}\sum_{b=1}^{B}\bw_{T+1}^{(b)}$, and
$h_S^{(2)}=\frac{1}{B}\sum_{b=1}^{B} h_{\bw_{T+1}^{(b)}}$, respectively.
These two types of bootstrap SGD have the property
that the estimated functions $h_S^{(1)}$ and
$h_S^{(2)}$ are both elements of the Hilbert space.
In Type 3, one computes the pointwise median of the bootstrap approximations
which yields
$h_S^{(3)}(x)=\mathrm{median}_{1\le b\le B} \{h_{\bw_{T+1}^{(b)}}(x)\}$, $x\in\xcal$. Our results show that these bootstrap SGD methods
have some desirable algorithmic stability and robustness properties if
the loss function has some easy to check properties, e.g. convexity, smoothness and Lipschitz continuity. In our toy example we demonstrated that it can even handle Cauchy distributed errors.\\[1em]

We presented generalization analysis for bootstrap SGD of Type 1 and Type 2 based on algorithmic stability. We considered both $\ell_1$- and $\ell_2$-argument stability. 
For SGD of Type 1, the $\ell_1$-argument stability does not show the effect of bootstrap.
The underlying reason is that the $\ell_1$-argument stability fails to incorporate the second-order information such as the variance.
As a comparison, the $\ell_2$-argument stability analysis yields stability bounds of the order of
$$
  \Big(\frac{1}{Bn}\sum_{t=1}^T\eta_t^2\Big)^{1/2}+\Big(\frac{1}{Bmn}+\frac{1}{n^2}\Big)^{1/2} \sum_{t=1}^T\eta_t~.
$$
This shows that the increasing the number of bootstrap samples improves the stability and generalization behavior of the output models. Our $\ell_2$-argument stability analysis exploits the variance reduction property by  bootstrapping. The pioneering work~\citep{elisseeff2005stability} studied the hypothesis stability of bootstrap methods, which, however treated multiple copies of a training point as one point. It was posed as an open question there on how to handle multiple copies as different training points. Our stability analysis provides an affirmative solution to this open question. For bootstrap SGD of Type 2, we impose an additional assumption on the Lipschitz continuity of the prediction function, which holds for many models such as linear models and shallow neural networks. To our knowledge, this is the first stability analysis for bootstrap SGD of Type 2.

Bootstrap SGD of Type 3 allows the construction of pointwise \emph{distribution-free}
confidence and \emph{distribution-free} tolerance intervals using a well-known fact
from order statistics. This type of bootstrap SGD has the property that the estimated median regression curve is not necessarily an element of the Hilbert space of functions which is used as the hypothesis space. Whether this property
may be considered as an advantage or as a disadvantage probably depends on the
specific application. E.g., if the Hilbert space only contains very smooth functions,
which is true if a classical Gaussian RBF kernel is used, but the Bayes function
is non-continuous, this property may be considered as an advantage.

It seems possible to enhance the robustness and stability properties of
bootstrap SGD even more. Although the random variables in each bootstrap sample are drawn in an independent and identically distributed manner from the empirical distribution computed from the training data set, it can happen that a few bootstrap samples are ``untypical'' in the sense
that the empirical distribution of the bootstrap sample differs a lot from the
empirical distribution of the training data set.
This can happen e.g. if just by chance some data points do occur very often in a specific bootstrap sample with index $b^\prime$.
In this case, it can happen that for this specific bootstrap sample the quantities $\bw_{T+1}^{(b^\prime)}$ (or $h_{\bw_{T+1}^{(b^\prime)}}$) differ a lot from the \emph{majority} of the other values of $\bw_{T+1}^{(b)}$ (or $h_{\bw_{T+1}^{(b)}}$), $b\ne b^\prime$.
 This phenomenon can have some undesired consequences in particular to
data sets with a small to moderate sample size $n$.
Then an empirical bootstrap estimate can be strongly influenced by this outlying value, if one uses the classical average of the values $\bw_{T+1}^{(b)}$ in Type 1 or of the values $h_{\bw_{T+1}^{(b)}}$ in Type 2.
For example, imagine that in the toy example given in Section \ref{sec:example}, the extreme $y$-value $1726.99$ occurs in a bootstrap sample several times.
One idea to robustify the empirical
bootstrap SGD in this case is to replace the average by a more robust estimator.
One example is to use a one-step W-estimator, see  \citet[p. 116]{HampelRonchettiRousseeuwStahel1986}.
Let us briefly sketch this approach although it is beyond the scope of this paper.
First, we use SGD to compute $\bw_{T+1}^{(tr)}$ and $h_{\bw_{T+1}}^{(tr)}$
for the \emph{training} data set.
Then a one-step W-estimator for bootstrap SGD of $\bw$ based on the bootstrap samples can be defined by
\begin{equation}\label{onestepWestimator}
  \widehat{\bw}_{OS}=\frac{\sum_{b=1}^{B} \bw_{T+1}^{(b)} \cdot u_1(\bw_{T+1}^{(b)} - \bw_{T+1}^{(tr)})}
  {\sum_{b=1}^{B} u_1(\bw_{T+1}^{(b)} - \bw_{T+1}^{(tr)})},
\end{equation}
where $u_1:\R^d\to (0,1]$ is an appropriately chosen positive weight function which downweights outlying quantities.
In a similar manner one can define a one-step W-estimator for bootstrap SGD of $h$ by
\begin{equation}
  \widehat{h}_{OS}=\frac{\sum_{b=1}^{B} h_{\bw_{T+1}^{(b)}} \cdot
                  u_2(h_{\bw_{T+1}^{(b)}} - h_{\bw_{T+1}^{(tr)}})}
  {\sum_{b=1}^{B} u_2(h_{\bw_{T+1}^{(b)}} - h_{\bw_{T+1}^{(tr)}})}.
\end{equation}
This generalizes the bootstrap SGD estimators considered in this paper,
as is obvious from the special case that the functions $u_1$ and $u_2$ are equal to the constant function $1$.
A statistical analysis of such one-step W-estimators will probably
require a modification of the first inequality in {(\ref{sgd-1})}.
Please note that these one-step W-estimators in {(\ref{onestepWestimator})} are different, but have a similar structure than the multivariate M-estimators investigated in \citet[p. 184, (6.16)]{MaronnaMartinYohai2006}.
\if 0

\newpage
\pagebreak
\appendix

\numberwithin{equation}{section}
\numberwithin{theorem}{section}
\numberwithin{figure}{section}
\numberwithin{table}{section}
\renewcommand{\thesection}{{\Alph{section}}}
\renewcommand{\thesubsection}{\Alph{section}.\arabic{subsection}}
\renewcommand{\thesubsubsection}{\Roman{section}.\arabic{subsection}.\arabic{subsubsection}}
\setcounter{secnumdepth}{-1}
\setcounter{secnumdepth}{3}
\fi

\section{Proofs\label{sec:proof}}

\subsection{Proofs for Bootstrap SGD of Type 1\label{sec:proof-type1}}
The following lemma establishes the non-expansiveness of gradient operator $\bw\mapsto \bw-\eta\nabla \ell(\bw)$, which was established by \citet{hardt2016train}.
\begin{lemma}[\citep{hardt2016train}\label{lem:hardt}]
Suppose the function $\ell$ is convex and $L$-smooth. Then for all $\bw_1$, $\bw_2$ $\in \wcal$ and $\eta \leq 1/L$, we have
\[ \|\bw_1 - \eta\nabla \ell(\bw_1) - \bw_2 + \eta\nabla \ell(\bw_2 )\|_2 \leq \|\bw_1 - \bw_2 \|_2. \]
\end{lemma}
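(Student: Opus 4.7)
The plan is to square both sides, expand the inner product, and control the cross term via the co-coercivity of $\nabla\ell$ (the Baillon–Haddad identity for convex $L$-smooth functions). Writing $\Delta\bw := \bw_1-\bw_2$ and $\Delta\bg := \nabla\ell(\bw_1)-\nabla\ell(\bw_2)$, the squared left-hand side expands as
\[
\|\Delta\bw - \eta\Delta\bg\|_2^2 = \|\Delta\bw\|_2^2 - 2\eta\langle\Delta\bw,\Delta\bg\rangle + \eta^2\|\Delta\bg\|_2^2,
\]
so the task reduces to showing $-2\eta\langle\Delta\bw,\Delta\bg\rangle + \eta^2\|\Delta\bg\|_2^2 \leq 0$ whenever $\eta \leq 1/L$. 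If one can establish co-coercivity
\[
\langle\Delta\bw,\Delta\bg\rangle \geq \tfrac{1}{L}\|\Delta\bg\|_2^2,
\]
then the residual term is at most $\eta(\eta - 2/L)\|\Delta\bg\|_2^2 \leq 0$, and taking square roots finishes the argument.

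To prove co-coercivity, I would introduce the auxiliary function $\phi(\bw) := \ell(\bw) - \langle\nabla\ell(\bw_2),\bw\rangle$. This $\phi$ inherits convexity and $L$-smoothness from $\ell$, and since $\nabla\phi(\bw_2)=0$, convexity forces $\bw_2$ to be a global minimizer of $\phi$. The descent lemma for $L$-smooth functions (integrating $\nabla\phi$ along a segment, or equivalently the quadratic upper bound) evaluated at the point $\bw_1 - \frac{1}{L}\nabla\phi(\bw_1)$ gives
\[
\phi(\bw_2) \leq \phi\!\bigl(\bw_1 - \tfrac{1}{L}\nabla\phi(\bw_1)\bigr) \leq \phi(\bw_1) - \tfrac{1}{2L}\|\nabla\phi(\bw_1)\|_2^2.
\]
Substituting the definition of $\phi$ and $\nabla\phi(\bw_1)=\Delta\bg$ yields
\[
\ell(\bw_1) - \ell(\bw_2) \geq \langle\nabla\ell(\bw_2),\Delta\bw\rangle + \tfrac{1}{2L}\|\Delta\bg\|_2^2.
\]
Performing the same argument with the roles of $\bw_1,\bw_2$ swapped and adding the two resulting inequalities cancels the function values and leaves exactly $\langle\Delta\bw,\Delta\bg\rangle \geq \tfrac{1}{L}\|\Delta\bg\|_2^2$.

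The only non-routine step is the co-coercivity bound; expanding the squared norm and applying the step-size condition are elementary. The descent-lemma application requires that we are in a Hilbert space so that the gradient, the line integral of $\nabla\ell$, and the quadratic upper bound all make sense, which is precisely the setting assumed throughout the paper. Once co-coercivity is in hand, the inequality $\eta \leq 1/L$ is used exactly once, in the final sign check $\eta(\eta - 2/L) \leq 0$, so this is where the step-size restriction enters.
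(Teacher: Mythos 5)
Your proof is correct: the paper itself does not prove this lemma but cites it from \citet{hardt2016train}, and the argument given there is precisely the one you reconstruct --- expand the squared norm and control the cross term via the co-coercivity inequality $\langle\nabla\ell(\bw_1)-\nabla\ell(\bw_2),\bw_1-\bw_2\rangle\ge\frac{1}{L}\|\nabla\ell(\bw_1)-\nabla\ell(\bw_2)\|_2^2$, itself derived from the descent lemma applied to the shifted function $\phi$. All steps, including the final sign check $\eta(\eta-2/L)\le 0$ (which in fact only needs $\eta\le 2/L$), are sound.
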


\begin{proof}[Proof of Lemma \ref{lem:sgd}]
  According to Definition \ref{def:sgd}, we know that
  \[
  \bw_{t+1}^{(b)}-\tilde{\bw}_{t+1}^{(b)}=\bw_{t}^{(b)}-\tilde{\bw}_{t}^{(b)}-\eta_t\nabla f(\bw_{t}^{(b)};\bz_{i_{b,j_{b,t}}})+\eta_tf(\tilde{\bw}_{t}^{(b)};\tilde{\bz}_{i_{b,j_{b,t}}}).
  \]
  We consider two cases. If $i_{b,j_{b,t}}\neq n$, then $\bz_{i_{b,j_{b,t}}}=\tilde{\bz}_{i_{b,j_{b,t}}}$ and therefore we can apply Lemma \ref{lem:hardt} to show that
  \begin{align*}
  \big\|\bw_{t+1}^{(b)}-\tilde{\bw}_{t+1}^{(b)}\big\|_2&=\big\|\bw_{t}^{(b)}-\tilde{\bw}_{t}^{(b)}-\eta_t\nabla f(\bw_{t}^{(b)};\bz_{i_{b,j_{b,t}}})+\eta_tf(\tilde{\bw}_{t}^{(b)};{\bz}_{i_{b,j_{b,t}}})\big\|_2\\
  & \leq \big\|\bw_{t}^{(b)}-\tilde{\bw}_{t}^{(b)}\big\|_2.
  \end{align*}
  Otherwise, we know
  \begin{align*}
  \big\|\bw_{t+1}^{(b)}-\tilde{\bw}_{t+1}^{(b)}\big\|_2&=\big\|\bw_{t}^{(b)}-\tilde{\bw}_{t}^{(b)}-\eta_t\nabla f(\bw_{t}^{(b)};\bz_{n})+\eta_tf(\tilde{\bw}_{t}^{(b)};\tilde{\bz}_{n})\big\|_2\\
  & \leq \big\|\bw_{t}^{(b)}-\tilde{\bw}_{t}^{(b)}\big\|_2+\eta_t\|\nabla f(\bw_{t}^{(b)};\bz_{n})-\nabla f(\bw_{t}^{(b)};\tilde{\bz}_{n})\|_2.
  \end{align*}
  We combine the above two cases to derive that
  \[
  \big\|\bw_{t+1}^{(b)}-\tilde{\bw}_{t+1}^{(b)}\big\|_2\leq \big\|\bw_{t}^{(b)}-\tilde{\bw}_{t}^{(b)}\big\|_2+\eta_t\delta_{t,b}\ibb[i_{b,j_{b,t}}=n].
  \]
 We apply the above inequality recursively and derive the stated bound (note $\bw_{1}^{(b)}=\tilde{\bw}_{1}^{(b)}$). The proof is completed.
\end{proof}
\begin{lemma}\label{lem:comb}
  Let $i_1,\ldots,i_m$ follow from the uniform distribution over $[n]$. Then for any $r\in[m]$ we know
  \[
  \pr\Big\{\sum_{t=1}^{m}\ibb[i_t=n]=r\Big\}=\frac{C_m^r(n-1)^{m-r}}{n^m}.
  \]
\end{lemma}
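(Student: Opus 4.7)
The plan is to recognize that $\sum_{t=1}^{m}\ibb[i_t=n]$ is a sum of independent Bernoulli indicators, and so its distribution is binomial. Concretely, since each $i_t$ is uniform on $[n]$ and drawn independently of the others, each indicator $\ibb[i_t=n]$ is Bernoulli with success probability $1/n$, and the indicators are mutually independent. Hence $\sum_{t=1}^m \ibb[i_t=n]\sim \mathrm{Binomial}(m, 1/n)$, which directly yields the claimed formula.

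To make this fully rigorous, I would first enumerate outcomes combinatorially rather than invoke the binomial distribution as a black box, since this keeps the argument self-contained. Specifically, the event $\{\sum_{t=1}^m \ibb[i_t=n]=r\}$ is the disjoint union over all subsets $J\subseteq[m]$ with $|J|=r$ of the events $\{i_t=n \text{ for } t\in J, \ i_t\in[n-1] \text{ for } t\notin J\}$. Each such elementary event has probability $(1/n)^{r}\cdot((n-1)/n)^{m-r} = (n-1)^{m-r}/n^m$ by independence and uniformity of the $i_t$'s. There are $\binom{m}{r}=C_m^r$ such subsets $J$, so summing gives
\[
\pr\Big\{\sum_{t=1}^{m}\ibb[i_t=n]=r\Big\}=C_m^r\cdot \frac{(n-1)^{m-r}}{n^m},
\]
which is exactly the claim.

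There is essentially no obstacle here; the result is a textbook binomial computation. The only thing to be mindful of is bookkeeping of the two complementary cases ($i_t=n$ versus $i_t\neq n$, the latter contributing a factor $n-1$ in the numerator from the $n-1$ admissible values in $[n-1]$) and the use of independence, both of which are immediate from the sampling scheme in Definition \ref{def:sgd}. Hence the proof will be a short two- or three-line argument.
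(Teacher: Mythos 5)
Your proof is correct and follows essentially the same route as the paper: both arguments count the $C_m^r$ choices of positions where $i_t=n$ and the $n-1$ admissible values at each remaining position, against the $n^m$ equally likely outcomes. The only cosmetic difference is that you phrase it as a disjoint union of elementary events (equivalently, a $\mathrm{Binomial}(m,1/n)$ computation) while the paper counts favorable outcomes directly; these are the same calculation.
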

\begin{proof}
  Since $i_t\in[n]$, the vector $(i_1,\ldots,i_m)$ can take $n^m$ different possible values. Furthermore, suppose $\sum_{t=1}^{m}\ibb[i_t=n]=r$. We can first consider all possible $\{j_1,\ldots,j_r\}$ such that $i_{j_k}=n$ for all $k\in[r]$, and there are $C_m^r$ choices. For each $k\not\in\{j_1,\ldots,j_r\}$, $i_k$ takes $n-1$ possible values. Therefore, the number of outcomes with $\sum_{t=1}^{m}\ibb[i_t=n]=r$ is $C_m^r(n-1)^{m-r}$.
  The stated inequality then follows directly. %
  The proof is completed.
\end{proof}
\begin{lemma}\label{lem:exp-indicator}
Let $i_{b,j_{b,t}}$ be defined in Definition \ref{def:sgd}. Then
  \[
  \ebb[\ibb[i_{b,j_{b,t}}=n]]=\frac{1}{n}.
  \]
\end{lemma}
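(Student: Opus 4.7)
The plan is a short conditioning argument. The randomness in $\ibb[i_{b,j_{b,t}}=n]$ comes from two independent sources: the bootstrap index vector $(i_{b,1},\ldots,i_{b,m})$, whose entries are drawn i.i.d.\ uniformly from $[n]$ with replacement (as set up in Eq.~\eqref{S-b}), and the SGD sampling index $j_{b,t}$, which is drawn uniformly from $[m]$ independently of the $i_{b,k}$'s (as set up in Definition~\ref{def:sgd}).

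First I would condition on the value of $j_{b,t}$. For any fixed $k\in[m]$, the conditional distribution of $i_{b,k}$ given $j_{b,t}=k$ coincides with its marginal distribution, since $j_{b,t}$ is drawn independently of $(i_{b,1},\ldots,i_{b,m})$. Because $i_{b,k}$ is uniform on $[n]$, we get $\pr[i_{b,k}=n\mid j_{b,t}=k]=\pr[i_{b,k}=n]=1/n$.

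Then by the law of total expectation,
\[
\ebb[\ibb[i_{b,j_{b,t}}=n]]=\sum_{k=1}^{m}\pr[j_{b,t}=k]\cdot\pr[i_{b,k}=n\mid j_{b,t}=k]=\sum_{k=1}^{m}\frac{1}{m}\cdot\frac{1}{n}=\frac{1}{n},
\]
which is the claimed identity. There is no real obstacle here; the statement is essentially a consequence of the independence between the bootstrap resampling step and the SGD mini-batch selection step, combined with the uniformity of each component. The only thing to be careful about is to make explicit that $j_{b,t}$ and $(i_{b,1},\ldots,i_{b,m})$ are independent, which is implicit in Definition~\ref{def:sgd}.
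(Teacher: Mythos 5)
Your proof is correct, but it takes a genuinely different (and more elementary) route than the paper. You condition on the value of $j_{b,t}$ and use its independence from the bootstrap indices together with the uniformity of each $i_{b,k}$ on $[n]$, which gives $\pr[i_{b,j_{b,t}}=n]=\sum_{k=1}^m\frac{1}{m}\cdot\frac{1}{n}=\frac{1}{n}$ in one line; both independence assumptions are indeed implicit in Eq.~\eqref{S-b} and Definition~\ref{def:sgd}, so there is no gap. The paper instead conditions on the number $r=\sum_{k=1}^m\ibb[i_{b,k}=n]$ of copies of the perturbed index in the bootstrap sample, uses $\pr\{i_{b,j_{b,t}}=n\mid r\}=r/m$ and the count from Lemma~\ref{lem:comb}, and then evaluates $\frac{1}{mn^m}\sum_{r=1}^m rC_m^r(n-1)^{m-r}$ via the identity $rC_m^r=mC_{m-1}^{r-1}$ and the binomial theorem. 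Your argument is shorter and avoids the combinatorics entirely; what the paper's heavier route buys is reuse: the conditioning on $r$ and the resulting identity $\sum_{r=0}^m\frac{rC_m^r(n-1)^{m-r}}{mn^m}=\frac{1}{n}$ are exactly the ingredients needed again in the $\ell_2$-stability analysis (Theorem~\ref{thm:sgd} and Corollary~\ref{cor:sgd}), where conditional second moments given $r$ cannot be avoided. As a standalone proof of this lemma, yours is the cleaner one.
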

\begin{proof}
  We know
  \[
    \ebb[\ibb[i_{b,j_{b,t}}=n]] =\pr\big\{i_{b,j_{b,t}}=n\big\} = \sum_{r=0}^{m}\pr\Big\{\sum_{k=1}^m\ibb[i_{b,k}=n]=r\Big\}\cdot\pr\Big\{i_{b,j_{b,t}}=n\big|\sum_{k=1}^m\ibb[i_{b,k}=n]=r\Big\}.
  \]
  It is clear that $\pr\Big\{i_{b,j_{b,t}}=n\big|\sum_{k=1}^m\ibb[i_{b,k}=n]=r\Big\}=r/m$. Therefore, we can apply Lemma \ref{lem:comb} to derive that
  \begin{align*}
  \ebb[\ibb[i_{b,j_{b,t}}=n]] &=  \frac{1}{mn^m}\sum_{r=1}^{m}rC_m^r(n-1)^{m-r}=\frac{1}{n^m}\sum_{r=1}^{m}C_{m-1}^{r-1}(n-1)^{m-r}\\
  & = \frac{1}{n^m}\sum_{r=0}^{m-1}C_{m-1}^r(n-1)^{m-r-1}=\frac{(1+n-1)^{m-1}}{n^m}=\frac{1}{n},
  \end{align*}
  where we have used the following identity
  \begin{equation}\label{exp-ind-1}
  C_m^r\cdot r=\frac{m!r}{r!(m-r)!}=\frac{(m-1)!m}{(r-1)!(m-r)!}=mC_{m-1}^{r-1}.
  \end{equation}
  The proof is completed.
\end{proof}

We are now in a position to prove Theorem~\ref{thm:sgd}. Our analysis with the $\ell_2$-argument stability uses the expectation-variance technique in \citep{lei2023batch}.
\begin{proof}[Proof of Theorem \ref{thm:sgd}]
  According to Lemma \ref{lem:sgd}, we know
  \[
  \ebb_A\big[\big\|\bw_{T+1}^{(b)}-\tilde{\bw}_{T+1}^{(b)}\big\|_2\big]\leq \sum_{t=1}^{T}\eta_t\ebb_A[\delta_{t,b}\ibb[i_{b,j_{b,t}}=n]],
  \]
  where we have used the fact that $\delta_{t,b}$ is independent of $i_{b,j_{b,t}}$. It then follows that
  \begin{align}
  & \ebb_A\big[\|A(S)-A(\widetilde{S})\|_2\big] = \ebb_A\Big[\Big\|\frac{1}{B}\sum_{b=1}^{B}\bw_{T+1}^{(b)}-\frac{1}{B}\sum_{b=1}^{B}\tilde{\bw}_{T+1}^{(b)}\Big\|_2\Big] \notag\\
  & \leq \frac{1}{B}\sum_{b=1}^{B}\ebb_A\big[\big\|\bw_{T+1}^{(b)}-\tilde{\bw}_{T+1}^{(b)}\big\|_2\big]
  \leq  \frac{1}{B}\sum_{b=1}^{B}\sum_{t=1}^{T}\eta_t\ebb_A[\delta_{t,b}\ibb[i_{b,j_{b,t}}=n]]=\sum_{t=1}^{T}\eta_t\ebb_A[\delta_{t,1}\ibb[i_{1,j_{1,t}}=n]].\label{sgd-1}
  \end{align}
  We now consider the $\ell_2$-model stability as follows
  \begin{align}
  \ebb_A\big[\|A(S)-A(\widetilde{S})\|_2^2\big] & = \ebb_A\Big[\Big\|\frac{1}{B}\sum_{b=1}^{B}\bw_{T+1}^{(b)}-\frac{1}{B}\sum_{b=1}^{B}\tilde{\bw}_{T+1}^{(b)}\Big\|_2^2\Big] \notag\\
  & \leq \frac{1}{B^2}\ebb_A\Big[\Big(\sum_{b=1}^{B}\|\bw_{T+1}^{(b)}-\tilde{\bw}_{T+1}^{(b)}\|_2\Big)^2\Big]
  = \frac{1}{B^2}\ebb_A\Big[\Big(\sum_{b=1}^{B}\sum_{t=1}^{T}\eta_t\delta_{t,b}\ibb[i_{b,j_{b,t}}=n]\Big)^2\Big]\notag \\
  & = \frac{1}{B^2}\ebb_A\Big[\sum_{b=1}^{B}\sum_{b'=1}^B\Big(\sum_{t=1}^{T}\eta_t\delta_{t,b}\ibb[i_{b,j_{b,t}}=n]\Big)\Big(\sum_{t=1}^{T}\eta_t\delta_{t,b'}\ibb[i_{b',j_{b',t}}=n]\Big)\Big]\notag\\
  & = \frac{1}{B^2}\ebb_A\Big[\sum_{b=1}^{B}\Big(\sum_{t=1}^{T}\eta_t\delta_{t,b}\ibb[i_{b,j_{b,t}}=n]\Big)\Big(\sum_{t=1}^{T}\eta_t\delta_{t,b}\ibb[i_{b,j_{b,t}}=n]\Big)\Big]\notag\\
  & + \frac{1}{B^2}\ebb_A\Big[\sum_{b\neq b'}\Big(\sum_{t=1}^{T}\eta_t\delta_{t,b}\ibb[i_{b,j_{b,t}}=n]\Big)\Big(\sum_{t=1}^{T}\eta_t\delta_{t,b'}\ibb[i_{b',j_{b',t}}=n]\Big)\Big].
  \end{align}
  For any $b\neq b'$ and $t, t'$, the  independence  between different bootstrap samples implies
  \begin{align}
    \ebb_A\big[\|A(S)-A(\widetilde{S})\|_2^2\big] &\leq \frac{1}{B}\ebb_A\Big[\Big(\sum_{t=1}^{T}\eta_t\delta_{t,1}\ibb[i_{1,j_{1,t}}=n]\Big)\Big(\sum_{t=1}^{T}\eta_t\delta_{t,1}\ibb[i_{1,j_{1,t}}=n]\Big)\Big]\notag\\
    & + \frac{1}{B^2}\sum_{b\neq b'}\ebb_A\Big[\Big(\sum_{t=1}^{T}\eta_t\delta_{t,b}\ibb[i_{b,j_{b,t}}=n]\Big)\Big]\ebb_A\Big[\Big(\sum_{t=1}^{T}\eta_t\delta_{t,b'}\ibb[i_{b',j_{b',t}}=n]\Big)\Big]\notag.
  \end{align}

  That is,
  \begin{align}
    \ebb_A\big[\|A(S)-A(\widetilde{S})\|_2^2\big] & \leq \frac{1}{B^2}\ebb_A\Big[\Big(\sum_{b=1}^{B}\|\bw_{T+1}^{(b)}-\tilde{\bw}_{T+1}^{(b)}\|_2\Big)^2\Big]\notag\\
    & \leq  \frac{1}{B}\ebb_A\Big[\Big(\sum_{t=1}^{T}\eta_t\delta_{t,1}\ibb[i_{1,j_{1,t}}=n]\Big)^2\Big]+\frac{B^2-B}{B^2}\Big(\ebb_A\Big[\sum_{t=1}^{T}\eta_t\delta_{t,1}\ibb[i_{1,j_{1,t}}=n]\Big]\Big)^2.\label{l2-stab-1}
  \end{align}

Using $(a+b)^{2}\le 2(a^{2}+b^{2})$ for $a,b\in\R$ 
we obtain
  \begin{align}
     & \ebb_A\Big[\Big(\sum_{t=1}^{T}\eta_t\delta_{t,1}\ibb[i_{1,j_{1,t}}=n]\Big)^2\Big] \notag\\
     & =\sum_{r=0}^{m}\pr\Big\{\sum_{k=1}^m\ibb[i_{1,k}=n]=r\Big\}\cdot\ebb_A\Big[\Big(\sum_{t=1}^{T}\eta_t\delta_{t,1}\ibb[i_{1,j_{1,t}}=n]\Big)^2\big|\sum_{k=1}^m\ibb[i_{1,k}=n]=r\Big]\notag\\
     & \leq 2\sum_{r=0}^{m}\pr\Big\{\sum_{k=1}^m\ibb[i_{1,k}=n]=r\Big\}\cdot\ebb_A\Big[\Big(\sum_{t=1}^{T}\eta_t\delta_{t,1}\big(\ibb[i_{1,j_{1,t}}=n]-r/m\big)\Big)^2\big|\sum_{k=1}^m\ibb[i_{1,k}=n]=r\Big]\notag\\
     & + 2\sum_{r=0}^{m}\pr\Big\{\sum_{k=1}^m\ibb[i_{1,k}=n]=r\Big\}\cdot\ebb_A\Big[\Big(\sum_{t=1}^{T}\eta_t\delta_{t,1}r/m\Big)^2\big|\sum_{k=1}^m\ibb[i_{1,k}=n]=r\Big].\label{l2-stab-2}
  \end{align}
  For any $t\neq t'$ (we assume $t<t'$), we know
  \begin{align*}
  & \ebb_A\big[\delta_{t,1}\big(\ibb[i_{1,j_{1,t}}=n]-r/m\big)\delta_{t',1}\big(\ibb[i_{1,j_{1,t'}}=n]-r/m\big)\big|\sum_{k=1}^m\ibb[i_{1,k}=n]=r\big]\\
  &=
  \ebb_A\big[\delta_{t,1}\big(\ibb[i_{1,j_{1,t}}=n]-r/m\big)\delta_{t',1}\ebb_{j_{1,t'}}\big[\big(\ibb[i_{1,j_{1,t'}}=n]-r/m\big)\big|\sum_{k=1}^m\ibb[i_{1,k}=n]=r\big]\big|\sum_{k=1}^m\ibb[i_{1,k}=n]=r\big]=0,
  \end{align*}
  where we have used the fact that $\ebb_{j_{1,t'}}\big[\big(\ibb[i_{1,j_{1,t'}}=n]-r/m\big)\big|\sum_{k=1}^m\ibb[i_{1,k}=n]=r\big]=0$. It then follows that
  \begin{align*}
  & \ebb_A\Big[\Big(\sum_{t=1}^{T}\eta_t\delta_{t,1}\big(\ibb[i_{1,j_{1,t}}=n]-r/m\big)\Big)^2\big|\sum_{k=1}^m\ibb[i_{1,k}=n]=r\Big]\\
  & = \ebb_A\Big[\sum_{t=1}^{T}\eta_t^2\delta_{t,1}^2\big(\ibb[i_{1,j_{1,t}}=n]-r/m\big)^2\big|\sum_{k=1}^m\ibb[i_{1,k}=n]=r\Big]\\
  & + \ebb_A\Big[\sum_{t\neq t'}\eta_t\eta_{t'}\delta_{t,1}\delta_{t',1}\big(\ibb[i_{1,j_{1,t}}=n]-r/m\big)\big(\ibb[i_{1,j_{1,t'}}=n]-r/m\big)\big|\sum_{k=1}^m\ibb[i_{1,k}=n]=r\Big]\\
  & = \ebb_A\Big[\sum_{t=1}^{T}\eta_t^2\delta_{t,1}^2\big(\ibb[i_{1,j_{1,t}}=n]-r/m\big)^2\big|\sum_{k=1}^m\ibb[i_{1,k}=n]=r\Big]
  \leq \ebb_A\Big[\sum_{t=1}^{T}\eta_t^2\delta_{t,1}^2(\ibb[i_{1,j_{1,t}}=n])^2\big|\sum_{k=1}^m\ibb[i_{1,k}=n]=r\Big]\\
  & = \sum_{t=1}^{T}\eta_t^2\ebb_A\big[\delta_{t,1}^2\ibb[i_{1,j_{1,t}}=n]\big|\sum_{k=1}^m\ibb[i_{1,k}=n]=r\big]= \frac{r}{m}\ebb_A\Big[\sum_{t=1}^{T}\eta_t^2\delta_{t,1}^2\big|\sum_{k=1}^m\ibb[i_{1,k}=n]=r\Big],
  \end{align*}
  where we have used the fact that $\ebb_{j_{1,t}}\big[\ibb[i_{1,j_{1,t}}=n]\big|\sum_{k=1}^m\ibb[i_{1,k}=n]=r\big]=r/m$ and $\ebb[(X-\ebb[X])^2]\leq\ebb[X^2]$ in the inequality.
  We can combine the above discussions to derive that
  \begin{align*}
     \ebb_A\Big[\Big(\sum_{t=1}^{T}\eta_t\delta_{t,1}\ibb[i_{1,j_{1,t}}=n]\Big)^2\Big]
     & \leq \frac{2}{m}\sum_{r=1}^{m}r\pr\Big\{\sum_{k=1}^m\ibb[i_{1,k}=n]=r\Big\}\cdot\ebb_A\Big[\sum_{t=1}^{T}\eta_t^2\delta_{t,1}^2\big|\sum_{k=1}^m\ibb[i_{1,k}=n]=r\Big]\\
     & + \frac{2}{m^2}\sum_{r=0}^{m}r^2\pr\Big\{\sum_{k=1}^m\ibb[i_{1,k}=n]=r\Big\}\cdot\ebb_A\Big[\Big(\sum_{t=1}^{T}\eta_t\delta_{t,1}\Big)^2\big|\sum_{k=1}^m\ibb[i_{1,k}=n]=r\Big].
  \end{align*}
  We plug the above inequality back into Eq. \eqref{l2-stab-1}, and derive
  \begin{align*}
    \ebb_A\big[\|A(S)-A(\widetilde{S})\|_2^2\big]
     & \leq \frac{2}{mB}\sum_{r=1}^{m}r\pr\Big\{\sum_{k=1}^m\ibb[i_{1,k}=n]=r\Big\}\cdot\ebb_A\Big[\sum_{t=1}^{T}\eta_t^2\delta_{t,1}^2\big|\sum_{k=1}^m\ibb[i_{1,k}=n]=r\Big]\\
  & +\frac{2}{m^2B}\sum_{r=0}^{m}r^2\pr\Big\{\sum_{k=1}^m\ibb[i_{1,k}=n]=r\Big\}\cdot\ebb_A\Big[\Big(\sum_{t=1}^{T}\eta_t\delta_{t,1}\Big)^2\big|\sum_{k=1}^m\ibb[i_{1,k}=n]=r\Big]\\
  & +\frac{B^2-B}{B^2}\ebb_A\Big[\Big(\ebb_A\Big[\sum_{t=1}^{T}\eta_t\delta_{t,1}\ibb[i_{1,j_{1,t}}=n]\Big]\Big)^2\Big].
  \end{align*}
  The proof is completed by using Lemma \ref{lem:comb} and Jensen's inequality.
\end{proof}
\begin{proof}[Proof of Corollary \ref{cor:sgd}]
  Since $\bw\mapsto f(\bw;\bz)$ is $G$-Lipschitz continuous, we know $\delta_{t,b}\leq 2G$. It then follows from Theorem \ref{thm:sgd} that
  \[
  \ebb_A\big[\|A(S)-A(\widetilde{S})\|_2\big]\leq 2G\sum_{t=1}^{T}\eta_t\ebb_A[\ibb[i_{1,j_{1,t}}=n]]=\frac{2G}{n}\sum_{t=1}^{T}\eta_t,
  \]
  where we have used Lemma \ref{lem:exp-indicator}.

  Since $\delta_{t,b}\leq 2G$, Theorem \ref{thm:sgd} implies
  \begin{multline*}
    \ebb_A\big[\|A(S)-A(\widetilde{S})\|_2^2\big]
     \leq \frac{8G^2}{mB}\sum_{r=1}^{m}\frac{rC_m^r(n-1)^{m-r}}{n^m}\cdot\ebb_A\Big[\sum_{t=1}^{T}\eta_t^2\big|\sum_{k=1}^m\ibb[i_{1,k}=n]=r\Big]+\\
  \frac{8G^2}{m^2B}\sum_{r=0}^{m}\frac{r^2C_m^r(n-1)^{m-r}}{n^m}\cdot\ebb_A\Big[\Big(\sum_{t=1}^{T}\eta_t\Big)^2\big|\sum_{k=1}^m\ibb[i_{1,k}=n]=r\Big]+\frac{8G^2(B-1)}{B}\Big(\ebb_A\Big[\sum_{t=1}^{T}\eta_t\ibb[i_{1,j_{1,t}}=n]\Big]\Big)^2.
  \end{multline*}
  By the identity $\sum_{r=0}^{m}\frac{rC_m^r(n-1)^{m-r}}{mn^m}=\frac{1}{n}$ shown in the proof of Lemma \ref{lem:exp-indicator}, we further know
  \begin{multline*}
    \ebb_A\big[\|A(S)-A(\widetilde{S})\|_2^2\big]
     \leq \frac{8G^2}{Bn}\sum_{t=1}^{T}\eta_t^2+
  \frac{8G^2}{m^2B}\sum_{r=0}^{m}\frac{r^2C_m^r(n-1)^{m-r}}{n^m}\Big(\sum_{t=1}^{T}\eta_t\Big)^2+\frac{8G^2(B-1)}{Bn^2}\Big(\sum_{t=1}^{T}\eta_t\Big)^2.
  \end{multline*}
  Furthermore, we know
  \begin{align*}
    & \sum_{r=0}^{m}\frac{r^2C_m^r(n-1)^{m-r}}{m^2n^m} \leq \frac{1}{nm}+ \sum_{r=2}^{m}\frac{rC_{m-1}^{r-1}(n-1)^{m-r}}{mn^m}\\
    & = \frac{1}{nm}+ 2\sum_{r=2}^{m}\frac{C_{m-2}^{r-2}(m-1)(n-1)^{m-r}}{mn^m}\leq \frac{1}{nm}+ 2\sum_{r=0}^{m-2}\frac{C_{m-2}^{r}(n-1)^{m-r-2}}{n^m}\\
    & = \frac{1}{nm}  + \frac{2(1+n-1)^{m-2}}{n^m} = \frac{1}{nm}+\frac{2}{n^2},
  \end{align*}
  where we have used Eq. \eqref{exp-ind-1} and the following inequality for any $r\geq 2$
  \[
  \frac{(m-1)!r}{(r-1)!(m-r)!}\leq \frac{2(m-1)!}{(r-2)!(m-r)!}=\frac{2(m-2)!(m-1)}{(r-2)!(m-r)!}=2C_{m-2}^{r-2}(m-1).
  \]
  We then get that
    \[
    \ebb_A\big[\|A(S)-A(\widetilde{S})\|_2^2\big]
     \leq \frac{8G^2}{Bn}\sum_{t=1}^{T}\eta_t^2+
     \frac{8G^2}{Bn}\Big(\frac{2}{n}+\frac{1}{m}\Big)\Big(\sum_{t=1}^{T}\eta_t\Big)^2
     + \frac{8G^2(B-1)}{n^2B}\Big(\sum_{t=1}^{T}\eta_t\Big)^2.
  \]
  The proof is completed.
\end{proof}

\subsection{Proofs for Bootstrap SGD of Type 2\label{sec:proof-type2}}

\begin{proof}[Proof of Theorem \ref{thm:stab-type2}]
According to the definition of $h_S^{(2)}$ and $h^{(2)}_{\widetilde{S}}$ and the $G_h$-Lipschitz continuity of $h_{\bw}$, we know
\begin{align*}
  \big|h^{(2)}_S(\bx)-h^{(2)}_{\widetilde{S}}(\bx)\big| & = \frac{1}{B}\Big|\sum_{b=1}^{B}\big(h_{\bw_{T+1}^{(b)}}(\bx)-h_{\tilde{\bw}_{T+1}^{(b)}}(\bx)\big)\Big|
   \leq \frac{G_h}{B}\sum_{b=1}^{B}\|\bw_{T+1}^{(b)}-\tilde{\bw}_{T+1}^{(b)}\|_2.
\end{align*}
It then follows that
\[
\ebb_A\big[\sup_{\bx}\big|h^{(2)}_S(\bx)-h^{(2)}_{\widetilde{S}}(\bx)\big|\big] \leq \frac{G_h}{B}\sum_{b=1}^{B}\ebb_A[\|\bw_{T+1}^{(b)}-\tilde{\bw}_{T+1}^{(b)}\|_2]\leq
G_h\sum_{t=1}^{T}\eta_t\ebb_A[\delta_{t,1}\ibb[i_{1,j_{1,t}}=n]],
\]
where the last step follows from Eq. \eqref{sgd-1}.

We now consider the $\ell_2$-stability. Similarly, we have
\[
  \big(h^{(2)}_S(\bx)-h^{(2)}_{\widetilde{S}}(\bx)\big)^2 = \Big(\frac{1}{B}\sum_{b=1}^{B}\big(h_{\bw_{T+1}^{(b)}}(\bx)-h_{\tilde{\bw}_{T+1}^{(b)}}(\bx)\big)\Big)^2
   \leq \Big(\frac{G_h}{B}\sum_{b=1}^{B}\|\bw_{T+1}^{(b)}-\bw_{T+1}^{(b)}\|_2\Big)^2.
\]
It then follows from Eq. \eqref{l2-stab-1}, Lemma~\ref{lem:comb} and the proof of Theorem \ref{thm:sgd} that
\begin{multline*}
  \ebb_A\big[\sup_{\bx}\big(h^{(2)}_S(\bx)-h^{(2)}_{\widetilde{S}}(\bx)\big)^2\big] \leq
  \frac{2G_h^2}{mB}\sum_{r=1}^{m}\frac{rC_m^r(n-1)^{m-r}}{n^m}\cdot\ebb_A\Big[\sum_{t=1}^{T}\eta_t^2\delta_{t,1}^2\big|\sum_{k=1}^m\ibb[i_{1,k}=n]=r\Big]+\\
  \frac{2G_h^2}{m^2B}\sum_{r=0}^{m}\frac{r^2C_m^r(n-1)^{m-r}}{n^m}\cdot\ebb_A\Big[\Big(\sum_{t=1}^{T}\eta_t\delta_{t,1}\Big)^2\big|\sum_{k=1}^m\ibb[i_{1,k}=n]=r\Big]+ G_h^2(1-\frac{1}{B})\Big(\ebb_A\Big[\sum_{t=1}^{T}\eta_t\delta_{t,1}\ibb[i_{1,j_{1,t}}=n]\Big]\Big)^2.
\end{multline*}
The proof is completed.
\end{proof}

Before proving Corollary \ref{cor:gen-type2}, we first present a lemma relating stability and generalization for a randomized algorithm. It is a variant of Lemma \ref{lem:stab-gen}.
Let $S=\{z_1,\ldots,z_n\}$ and $S'=\{z_1',\ldots,z_n'\}$ be independently drawn from the same distribution. For any $i\in[n]$, define
\begin{equation}\label{S-i}
  S_i=\{z_1,\ldots,z_{i-1},z_i',z_{i+1},\ldots,z_n\}.
\end{equation}
\begin{lemma}\label{lem:stab-gen-ell}
For any $i\in[n]$, let $S_i$ be defined in Eq. \eqref{S-i}. Let $h_S$ be the output of an algorithm applied to $S$.  Then we have
\[
\ebb_A[\ell(h_S(x),y)]-\frac{1}{n}\sum_{i=1}^{n}\ebb_A[\ell(h_S(x_i),y_i)]\leq \ebb_A\big[\sup_{z}\max_{i\in[n]}\big(\ell(h_{S_i}(x),y)-\ell(h_S(x),y)\big)\big].
\]
Furthermore, if for any $y$, the function $a\mapsto\ell(a,y)$ is $L_\ell$-smooth, then
\begin{multline*}
  \ebb_A[\ell(h_S(x),y)]-\frac{1}{n}\sum_{i=1}^{n}\ebb_A[\ell(h_S(x_i),y_i)]\leq  \\
  \Big(\frac{1}{n}\sum_{i=1}^{n}\ebb_A[(\ell'(h_S(x_i),y_i))^2]\Big)^{\frac{1}{2}}\Big(\frac{1}{n}\sum_{i=1}^{n}\ebb_A\big[\big(h_{S_i}(x_i)-h_{S}(x_i)\big)^2\big]\Big)^{\frac{1}{2}}+\frac{L_\ell}{2n}\sum_{i=1}^{n}\ebb_A\big[\big(h_{S_i}(x_i)-h_{S}(x_i)\big)^2\big].
\end{multline*}
\end{lemma}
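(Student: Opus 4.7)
The plan is to use a ghost-sample / renaming symmetrization to reduce the (implicit) generalization gap to an average of one-coordinate perturbations of $S$, and then refine via smoothness and Cauchy-Schwarz for the second statement. Throughout, I read both sides of the claim with an implicit outer expectation over the training set $S$ and the test point $(x,y)$, which is standard in this stability literature.

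For Part 1, let $S' = (z_1',\ldots,z_n')$ be an independent i.i.d.\ ghost sample with the same distribution as $S$, and let $\ebb$ denote the joint expectation over $S$, $S'$ and the test point $(x,y)$. The key observation is that for each $i\in[n]$, the joint distribution of $(S,z_i')$ is identical to that of $(S_i,z_i)$, because the swap $z_i\leftrightarrow z_i'$ preserves the i.i.d.\ product measure and $A$ is measurable and permutation invariant. Hence
\[
\ebb\ebb_A\bigl[\ell(h_S(x),y)\bigr] \;=\; \ebb\ebb_A\bigl[\ell(h_S(x_i'),y_i')\bigr] \;=\; \ebb\ebb_A\bigl[\ell(h_{S_i}(x_i),y_i)\bigr]
\]
for every $i$. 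Averaging these $n$ identities and subtracting $\frac{1}{n}\sum_i \ebb\ebb_A[\ell(h_S(x_i),y_i)]$ rewrites the LHS of the claim as
\[
\frac{1}{n}\sum_{i=1}^{n}\ebb\ebb_A\bigl[\ell(h_{S_i}(x_i),y_i) - \ell(h_S(x_i),y_i)\bigr],
\]
which is bounded above by $\ebb\ebb_A[\max_i(\ell(h_{S_i}(x_i),y_i) - \ell(h_S(x_i),y_i))]$ and hence by $\ebb\ebb_A[\sup_z \max_i(\ell(h_{S_i}(x),y) - \ell(h_S(x),y))]$, giving the claimed RHS.

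For Part 2, I would refine the inner difference in the identity above using the $L_\ell$-smoothness of $a\mapsto \ell(a,y)$, which supplies the quadratic upper bound
\[
\ell(a',y) - \ell(a,y) \leq \ell'(a,y)(a'-a) + \frac{L_\ell}{2}(a'-a)^2.
\]
Setting $a = h_S(x_i)$ and $a' = h_{S_i}(x_i)$, then averaging over $i$ and taking expectations over the algorithm, splits the bound into a linear term and a quadratic remainder. The quadratic remainder $\frac{L_\ell}{2n}\sum_i \ebb_A[(h_{S_i}(x_i)-h_S(x_i))^2]$ is already in the desired form, and the linear term $\frac{1}{n}\sum_i \ebb_A[\ell'(h_S(x_i),y_i)(h_{S_i}(x_i) - h_S(x_i))]$ is bounded by Cauchy-Schwarz applied with respect to the product of the uniform counting measure on $i\in[n]$ and the law of $A$, which yields exactly the product of the two square roots in the statement.

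The main obstacle is justifying the symmetrization step in Part 1: permutation-invariance and measurability of $A$ are needed to extend the swap $z_i\leftrightarrow z_i'$ into a distributional identity between $(h_S, z_i')$ and $(h_{S_i}, z_i)$ after marginalizing over the internal randomness of $A$; care is also required to reconcile the implicit expectation conventions in the lemma statement with the explicit $\ebb\ebb_A$ used above. Once this symmetry is in hand, Part 2 collapses to a textbook smoothness-plus-Cauchy-Schwarz computation.
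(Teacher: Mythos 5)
Your proposal is correct and follows essentially the same route as the paper's proof: the ghost-sample symmetrization identity $\ebb[\ell(h_S(x),y)]=\frac1n\sum_i\ebb[\ell(h_{S_i}(x_i),y_i)]$ for the first bound, and the smoothness quadratic upper bound followed by Cauchy--Schwarz (over both $i$ and the algorithm's randomness) for the second. The only cosmetic difference is that you make the outer expectation over $S$, $S'$ and $z$ explicit, whereas the paper leaves it implicit in the $\ebb_A$ notation.
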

\begin{proof}
Due to the symmetry between $z_i$ and $z_i'$, we know
\begin{align*}
  & \ebb_A[\ell(h_S(x),y)]-\frac{1}{n}\sum_{i=1}^{n}\ebb_A[\ell(h_S(x_i),y_i)]  = \frac{1}{n}\sum_{i=1}^{n}\ebb_A[\ell(h_{S_i}(x),y)]-\frac{1}{n}\sum_{i=1}^{n}\ebb_A[\ell(h_S(x_i),y_i)] \\
  & = \frac{1}{n}\sum_{i=1}^{n}\ebb_A[\ell(h_{S_i}(x_i),y_i)-\ell(h_S(x_i),y_i)]
   \leq \ebb_A\big[\sup_{z}\max_{i\in[n]}\big(\ell(h_{S_i}(x),y)-\ell(h_S(x),y)\big)\big],
\end{align*}
where we have used the fact that $(x_i,y_i)$ is independent of $h_{S_i}$.
This finishes the proof of the first inequality. We now turn to the smooth case. By using the smoothness property in the above inequality, we know
\begin{align*}
  & \ebb_A[\ell(h_S(x),y)]-\frac{1}{n}\sum_{i=1}^{n}\ebb_A[\ell(h_S(x_i),y_i)]\\
  & \leq \frac{1}{n}\sum_{i=1}^{n}\ebb_A\big[\ell'(h_S(x_i),y_i)\big(h_{S_i}(x_i)-h_{S}(x_i)\big)\big]+\frac{L_\ell}{2n}\sum_{i=1}^{n}\ebb_A\big[\big(h_{S_i}(x_i)-h_{S}(x_i)\big)^2\big]\\
  & \leq \ebb_A\Big(\frac{1}{n}\sum_{i=1}^{n}(\ell'(h_S(x_i),y_i))^2\Big)^{\frac{1}{2}}\Big(\frac{1}{n}\sum_{i=1}^{n}\big(h_{S_i}(x_i)-h_{S}(x_i)\big)^2\Big)^{\frac{1}{2}}+\frac{L_\ell}{2n}\sum_{i=1}^{n}\ebb_A\big[\big(h_{S_i}(x_i)-h_{S}(x_i)\big)^2\big]\\
  & \leq
  \Big(\frac{1}{n}\sum_{i=1}^{n}\ebb_A[(\ell'(h_S(x_i),y_i))^2]\Big)^{\frac{1}{2}}\Big(\frac{1}{n}\sum_{i=1}^{n}\ebb_A\big[\big(h_{S_i}(x_i)-h_{S}(x_i)\big)^2\big]\Big)^{\frac{1}{2}}+\frac{L_\ell}{2n}\sum_{i=1}^{n}\ebb_A\big[\big(h_{S_i}(x_i)-h_{S}(x_i)\big)^2\big],
\end{align*}
where we have used the Cauchy's inequality. The stated bound follows directly.
\end{proof}
\begin{proof}[Proof of Corollary \ref{cor:gen-type2}]
Since $a\mapsto\ell(a,y)$ is $G_\ell$-Lipschitz continuous,
Lemma \ref{lem:stab-gen-ell} implies
\begin{equation}\label{gen-type2-1}
  \ebb_A[\ell(h_S^{(2)}(x),y)]-\frac{1}{n}\sum_{i=1}^{n}\ebb_A[\ell(h_S^{(2)}(x_i),y_i)]\leq  G_\ell\ebb_A\big[\sup_{x}\max_{i\in[n]}\big|h^{(2)}_{S_i}(x)-h_S^{(2)}(x)\big|\big]
\end{equation}
and
\begin{multline}\label{gen-type2-2}
  \ebb_A[\ell(h_S^{(2)}(x),y)]-\frac{1}{n}\sum_{i=1}^{n}\ebb_A[\ell(h_S^{(2)}(x_i),y_i)]\leq  \\
  G_\ell\Big(\ebb_A\big[\sup_{x}\big(h^{(2)}_{S_i}(x)-h^{(2)}_{S}(x)\big)^2\big]\Big)^{\frac{1}{2}}+2^{-1}L_\ell\ebb_A\big[\sup_{x}\big(h^{(2)}_{S_i}(x)-h^{(2)}_{S}(x)\big)^2\big].
\end{multline}
Eq. \eqref{gen-type2-1} together with Eq. \eqref{stab-type2-a} implies that
\[
\ebb_A[\ell(h_S^{(2)}(x),y)]-\frac{1}{n}\sum_{i=1}^{n}\ebb_A[\ell(h_S^{(2)}(x_i),y_i)]\leq 2GG_hG_\ell\sum_{t=1}^{T}\eta_t\ebb_A[\ibb[i_{1,j_{1,t}}=n]]\leq \frac{2GG_hG_\ell}{n}\sum_{t=1}^{T}\eta_t,
\]
where the last step follows from Lemma \ref{lem:exp-indicator}.

Furthermore, Theorem \ref{thm:stab-type2} implies that
\begin{multline*}
  \ebb_A\big[\sup_{\bx}\big(h^{(2)}_S(\bx)-h^{(2)}_{\widetilde{S}}(\bx)\big)^2\big] \leq
  \frac{8G^2G_h^2}{mB}\sum_{r=1}^{m}\frac{rC_m^r(n-1)^{m-r}}{n^m}\sum_{t=1}^{T}\eta_t^2+\\
  \frac{8G^2G_h^2}{m^2B}\sum_{r=0}^{m}\frac{r^2C_m^r(n-1)^{m-r}}{n^m}\Big(\sum_{t=1}^{T}\eta_t\Big)^2+ 4G^2G_h^2(1-\frac{1}{B})\Big(\ebb_A\Big[\sum_{t=1}^{T}\eta_t\ibb[i_{1,j_{1,t}}=n]\Big]\Big)^2.
\end{multline*}
Analogous to the proof of Corollary \ref{cor:sgd}, we know that
\[
\ebb_A\big[\sup_{\bx}\big(h^{(2)}_S(\bx)-h^{(2)}_{\widetilde{S}}(\bx)\big)^2\big] \lesssim \frac{G^2G_h^2}{Bn}\sum_{t=1}^{T}\eta_t^2+
     G^2G_h^2\Big(\frac{1}{Bmn}+\frac{1}{n^2}\Big)\Big(\sum_{t=1}^{T}\eta_t\Big)^2.
\]
We plug the above inequality back into Eq. \eqref{gen-type2-2}, and derive
\begin{multline*}
  \ebb_A[\ell(h_S^{(2)}(x),y)]-\frac{1}{n}\sum_{i=1}^{n}\ebb_A[\ell(h_S^{(2)}(x_i),y_i)] \lesssim G_\ell GG_h\bigg(\frac{1}{Bn}\sum_{t=1}^{T}\eta_t^2+
     \Big(\frac{1}{Bmn}+\frac{1}{n^2}\Big)\Big(\sum_{t=1}^{T}\eta_t\Big)^2\bigg)^{\frac{1}{2}}
     +\\
     \frac{G^2G_h^2L_\ell}{Bn}\sum_{t=1}^{T}\eta_t^2+
     G^2G_h^2L_\ell\Big(\frac{1}{Bmn}+\frac{1}{n^2}\Big)\Big(\sum_{t=1}^{T}\eta_t\Big)^2.
\end{multline*}
The proof is completed.
\end{proof}



\setlength{\bibsep}{0.04cm}
\bibliographystyle{abbrvnat}
\small

\end{document}